\documentclass{article}

     \usepackage[nonatbib,final]{neurips_2019}

\usepackage[utf8]{inputenc} %
\usepackage[T1]{fontenc}    %
\usepackage[pagebackref=true,breaklinks=true,letterpaper=true,colorlinks,bookmarks=false]{hyperref}
\usepackage{url}            %
\usepackage{booktabs}       %
\usepackage{amsfonts}       %
\usepackage{nicefrac}       %
\usepackage{microtype}      %
 \usepackage{multirow}
\usepackage{pifont}
\usepackage{amssymb}
\usepackage{amsmath}
\usepackage{xcolor}
\usepackage{graphicx}
\usepackage{subcaption}
\usepackage{xspace}
\usepackage{wrapfig}
\usepackage{algorithm}
\usepackage{algorithmic}
\usepackage{bm}

\usepackage{pgfplots}
\usepgfplotslibrary{statistics} %

\usepackage{pgfplots}
\usepackage{mathtools}
\usepackage{amsthm}
\usepackage{mathtools}
\usepackage{mathrsfs}
\usepackage{thmtools}
\usepackage{xfrac}
\declaretheorem{theorem}

\declaretheorem[sibling=theorem]{lemma}
\declaretheorem[sibling=theorem]{corollary}
\declaretheorem[sibling=theorem]{definition}

\usepackage{pgfplots}
\usepackage{mathtools}
\usepackage{amsthm}
\usepackage{mathtools}
\usepackage{mathrsfs}
\usepackage{thmtools}
\usepackage{xfrac}

\newcommand{\PP}{\mathbb{P}}

\newcommand{\RR}{\mathbb{R}}

\newcommand{\nc}{\newcommand}
\nc{\rnc}{\renewcommand}
\nc{\lbar}[1]{\overline{#1}}
\nc{\bra}[1]{\langle#1|}
\nc{\ket}[1]{|#1\rangle}
\nc{\ketbra}[2]{|#1\rangle\!\langle#2|}
\nc{\braket}[2]{\langle#1,#2\rangle}
\nc{\proj}[1]{| #1\rangle\!\langle #1 |}
\nc{\avg}[1]{\langle#1\rangle}
\nc{\smfrac}[2]{\mbox{$\frac{#1}{#2}$}}
\nc{\tr}{\operatorname{tr}}
\nc{\pd}{\mathrm{P}}
\nc{\gtd}{\mathrm{D}}
\nc{\gfid}{\bar{\mathrm{F}}}

\nc{\tchern}{\left( \smfrac{7}{4} p - 1 \right)}
\nc{\tailprobchern}{0.995}
\nc{\splitpoint}{\smfrac{1}{4}}
\nc{\epsupper}{\smfrac{1}{4}}
\nc{\mconst}{16}

\newcommand{\I}{\ensuremath{\mathbb{I}}}

\newcommand{\prob}{\ensuremath{\mathbb{P}}}

\newcommand{\yhi}{\ensuremath{\hat{y}_i}}
\newcommand{\vzbar}{\overline{\ensuremath{z}}}

\newcommand{\what}{\ensuremath{\hat{w}}}

\newcommand{\thetastar}{\ensuremath{\theta^\star}}

\DeclarePairedDelimiter{\parens}{\lparen}{\rparen}
\DeclarePairedDelimiter{\brackets}{[}{]}
\DeclarePairedDelimiter{\ip}{\langle}{\rangle}

\newtheorem{thm}{Theorem}

\def\neurips{neurips}
\def\version{arxiv}  %

\DeclareMathOperator*{\argmax}{arg\,max}

\newcommand{\xmark}{\ding{55}}
\newcommand{\fgsm}{FGSM$^{20}$}
\newcommand{\multiTarget}{MultiTar.}
\newcommand{\multitargeted}{\texttt{MultiTargeted}}
\newcommand{\bigunsup}[1]{80m@#1K}
\newcommand{\bigunsupn}{80m@N}

\newcommand{\wresnet}[1]{WRN-#1}

\newcommand{\accnat}{\ensuremath{\mathcal{A}_{nat}}}
\newcommand{\accfgsm}{$\mathcal{A}_{{FGSM}^{20}}$}
\newcommand{\accmulti}{$\mathcal{A}_{{\multiTarget}}$}

\newcommand{\eg}{\textit{e.g.}\@\xspace}
\newcommand{\ie}{\textit{i.e.}\@\xspace}
\newcommand{\supdata}{\ensuremath{\mathcal{S}_{n}}}
\newcommand{\unsupdata}{\ensuremath{\mathcal{U}_{m}}}
\newcommand{\unsupdist}{\ensuremath{P(X)}}

\newcommand{\supdist}{\ensuremath{P(X,Y)}}
\newcommand{\labelset}{\ensuremath{\mathcal{Y}}}
\newcommand{\imset}{\ensuremath{\mathcal{X}}}

\newcommand{\norm}[1]{\left\lVert#1\right\rVert}

\newcommand{\mathev}{\mathop{\mathbb{E}}}

\newcommand{\normball}{\ensuremath{N_\epsilon}}
\newcommand{\loss}{\ensuremath{\mathcal{L}}}
\newcommand{\eps}{\ensuremath{\varepsilon}}

\newcommand{\bmx}{\ensuremath{\bm{x}}}
\newcommand{\bmy}{\ensuremath{\bm{y}}}
\newcommand{\empuatloss}{\ensuremath{\hat{\mathcal{L}}^{OT}}}
\newcommand{\empadvloss}{\ensuremath{\hat{\mathcal{L}}^{adv}}}

\newcommand{\smalldots}{\hbox to 1em{.\hss.\hss.}}

\newcommand{\tinyds}{80m}
\newcommand{\tinydsfullname}{80 Million Tiny Images}

\newcommand{\uatpp}{UAT++}
\newcommand{\uatft}{UAT-FT}
\newcommand{\uatot}{UAT-OT}
\newcommand{\uatftest}{\ensuremath{\hat{w}}}
\newcommand{\cifar}{CIFAR-10}
\newcommand{\svhn}{SVHN}
\newcommand{\lossot}{\mathcal{L}^{OT}_{\text{unsup}}}
\newcommand{\losssup}{\mathcal{L}_{\text{sup}}}
\newcommand{\xent}{\texttt{xent}}
\hypersetup{
	plainpages=false,
	colorlinks=true,              %
	linkcolor=blue,               %
	anchorcolor=blue,             %
	citecolor=blue,               %
	filecolor=blue,               %
	urlcolor=blue,                %
	pdfview=FitH,                 %
	pdfstartview=FitH,            %
	pdfpagelayout=SinglePage      %
}

\title{Are Labels Required for Improving \\ Adversarial Robustness?}

\author{
	\renewcommand*{\thefootnote}{\fnsymbol{footnote}}
	Jonathan Uesato\footnotemark[1]
\quad\quad\quad
	Jean-Baptiste Alayrac\footnotemark[1]
\quad\quad\quad
	Po-Sen Huang\footnotemark[1]
\\ \\
\textbf{Robert Stanforth} 
\quad\quad\quad
\textbf{Alhussein Fawzi}
\quad\quad\quad
\textbf{Pushmeet Kohli}
	\\ \\
	DeepMind
	\\
	\texttt{\{juesato,jalayrac,posenhuang\}@google.com}
}

\begin{document}

\maketitle

\renewcommand{\thefootnote}{\fnsymbol{footnote}}
\footnotetext[1]{Equal contribution, random order.}
\ifx\version\neurips
\footnotetext[2]{The authors declare that the present paper is independent of ``Unlabeled Data Improves Adversarial Robustness''~\cite{Carmon_robustness}.}
\fi
\renewcommand*{\thefootnote}{\arabic{footnote}}

\begin{abstract}
Recent work has uncovered the interesting (and somewhat surprising) finding that training models to be invariant to adversarial perturbations requires substantially larger datasets than those required for standard classification. 
This result is a key hurdle in the deployment of robust machine learning models in many real world applications where labeled data is expensive. 
Our main insight is that \emph{unlabeled} data can be a competitive alternative to labeled data for training adversarially robust models.
Theoretically, we show that in a simple statistical setting, the sample complexity for learning an adversarially robust model from unlabeled data matches the fully supervised case up to constant factors.
On standard datasets like \cifar, a simple Unsupervised Adversarial Training (UAT) approach using unlabeled data improves robust accuracy by $21.7\%$ over using 4K supervised examples alone, and captures over $95\%$ of the improvement from the same number of labeled examples.
Finally, we report an improvement of 4$\%$ over the previous state-of-the-art on \cifar{} against the strongest known attack by using additional unlabeled data from the uncurated \tinydsfullname{} dataset.
This demonstrates that our finding extends as well to the more realistic case where unlabeled data is also uncurated, therefore opening a new avenue for improving adversarial training. 
\end{abstract}

\section{Introduction}
Deep learning has revolutionized many areas of research such as natural language processing, speech recognition or computer vision.
System based on these techniques are now being developed and deployed for a wide variety of applications, from recommending/ranking content on the web~\cite{covington2016deep, huang2013learning} to autonomous driving~\cite{e2eselfdrivingcar} and even in medical diagnostics~\cite{de2018clinically}. The safety-critical nature of some of these tasks necessitates the need for ensuring that the deployed models are robust and generalize well to all sorts of variations that can occur in the inputs.	
Yet, it has been shown that the commonly used deep learning models are vulnerable to adversarial perturbations in the input~\cite{Szegedy14Intrig}, \eg it is possible to fool an image classifier into predicting arbitrary classes by carefully choosing perturbations imperceptible to the human eye.

Since the discovery of these results, many approaches have been developed to prevent this type of behaviour. 
One of the most effective and popular approaches is known as \emph{supervised adversarial training}~\cite{goodfellow2014explaining,madry18iclr} which works by generating adversarial samples in an online manner through an inner optimization procedure and then using them to augment the standard training set.
Despite substantial work in this space, accuracy of classifiers on adversarial inputs remains much lower than that on normal inputs. 
Recent theoretical work has offered a reason for this discrepancy, and argues that training models to be invariant to adversarial perturbations requires substantially larger datasets than those required for the standard classification task~\cite{Schmidt18moredata}.

This result is a key hurdle in the development and deployment of robust machine learning models in many real world applications where 
labeled data is expensive.
Our central hypothesis is that additional unlabeled examples may suffice for adversarial robustness.
Intuitively, this is based on two related observations, explained in Sections \ref{sec:uat} and \ref{sec:label_noise_analysis}.
First, adversarial robustness depends on the smoothness of the classifier around natural images, which can be estimated from unlabeled data.
Second, only a relatively small amount of labeled data is needed for standard generalization.
Thus, if adversarial training is robust to label noise, labels from supervised examples can be propagated to unsupervised examples to train a smoothed classifier with improved adversarial robustness.

Motivated by this, we explore Unsupervised Adversarial Training (UAT) to use unlabeled data for adversarial training.
We study this algorithm in a simple theoretical setting, proposed by~\cite{Schmidt18moredata} to study adversarial generalization. We show that once we are given a single labeled example, the sample complexity of UAT matches the fully supervised case up to constant factors.
In independent and concurrent work, \cite{Carmon_robustness, Zhai_robustness, Najafi_robustness} also study the use of unlabeled data for improving adversarial robustness, which we discuss in Section \ref{sec:related_work}.

Experimentally, we find strong support for our main hypothesis.
On \cifar{} and SVHN, with very limited annotated data, our method reaches robust accuracies of 54.1\% and 84.4\% respectively against the \fgsm~attack~\cite{kurakin17advScale}.
These numbers represent a significant improvement over purely supervised approaches (32.5\% and 66.0\%) on same amount of data and almost match methods that have access to full supervision (55.5\% and 86.2\%), capturing over 95\% of their improvement, without labels.
Further, we show that we can successfully leverage realistically obtained unsupervised and uncurated data to improve the state-of-the-art on \cifar{} at $\eps=8/255$ from 52.58\% to 56.30\% against the strongest known attack.

	\noindent
	\textbf{Contributions.}
\textbf{(i)} In Section~\ref{sec:uat}, we propose a simple and theoretically grounded strategy, UAT, to leverage unsupervised data for adversarial training. \textbf{(ii)} We provide, in Section~\ref{subsec:exp_control}, strong empirical support for our initial hypothesis that \emph{unlabeled data can be competitive with labeled data when it comes to training adversarially robust classifiers}, therefore opening a new avenue for improving adversarial training. 
\textbf{(iii)} Finally, by leveraging noisy and uncurated data obtained from web queries, we set a new state-of-the-art on \cifar{} without depending on any additional human labeling.
\vspace*{-0.3cm}	
\section{Related Work}
\label{sec:related_work}
\vspace*{-0.2cm}	
\noindent
\textbf{Adversarial Robustness.}
\cite{biggio2013evasion,Szegedy14Intrig} observed that neural networks which achieve extremely high accuracy on a randomly sampled test set may nonetheless be vulnerable to adversarial examples, or small but highly optimized perturbations of the data which cause misclassification.
Since then, many papers have proposed a wide variety of defenses to the so-called adversarial attacks, though few have proven robust against stronger attacks \cite{athalye2018obfuscated, carlini2017adversarial, uesato2018adversarial}.
One of the most successful approaches for obtaining classifiers that are adversarially robust is adversarial training~\cite{athalye2018obfuscated, uesato2018adversarial}.
Adversarial training directly minimizes the adversarial risk by approximately solving an inner maximization problem by projected gradient descent (PGD) to generate small perturbations that increase the prediction error, and uses these perturbed examples for training~\cite{goodfellow2014explaining, kurakin17advScale, madry18iclr}.
The TRADES approach in \cite{Zhang19Trades} improves these results by instead minimizing a surrogate loss which upper bounds the adversarial risk.
Their objective is very similar to the one used in \uatot{} (UAT with online targets introduced in Section~\ref{subsec:uat}) but is estimated purely on labeled rather than unlabeled data.

Common to all these approaches, a central challenge is adversarial generalization. 
For example, on \cifar{} with a perturbation of $\eps=8/255$, the adversarially trained model in \cite{madry18iclr} achieves an adversarial accuracy of $46\%$, despite near $100\%$ adversarial accuracy on the train set.
For comparison, standard models can achieve natural accuracy of $96\%$ on \cifar{} \cite{zagoruyko2016wide}.
Several recent papers have studied generalization bounds for adversarial robustness~\cite{attias2018improved,khim2018adversarial,Schmidt18moredata,yin2018rademacher}.
Of particular relevance to our work, \cite{Schmidt18moredata} argues that adversarial generalization may require more data than natural generalization.
One solution explored in \cite{hendrycks19pre} is to use pretraining on ImageNet, a large supervised dataset, to improve adversarial robustness.
In this work, we study whether more labeled data is necessary, or whether unlabeled data can suffice.
While to our knowledge, this question has not been directly studied, several works such as \cite{gu2014towards, samangouei2018defense, song2017pixeldefend} propose using generative models to detect or denoise adversarial examples, which can in principle be learned on unlabeled data.
However, so far, such approaches have not proven to be robust to strong attacks \cite{athalye2018obfuscated, uesato2018adversarial}.

\noindent
\textbf{Semi-supervised learning.}
Learning from unlabeled data is an active area of research.
The semi-supervised learning approach~\cite{chapelleSSL} which, in addition to labeled data, also uses unlabeled data to learn better models is particularly relevant to our work.
One of the most effective technique for semi-supervised learning is to use smoothness regularization: the model is trained to be invariant to small perturbation applied to unsupervised samples~\cite{bachman14pseudoensemble,berthelot19mixmatch,laine17tessl,Miyato17VAT,Sajjadi16,Xie19UAD}.
Of particular relevance to UAT, \cite{Miyato17VAT} also uses adversarial perturbations to smooth the model outputs.
In addition, co-training \cite{blum1998combining} and recent extensions~\cite{chen18trinet,rebuffi19ssl} use the most confident predictions on unlabeled data to iteratively construct additional labeled training data.
These work all focus on improving standard generalization whereas we explore the use of similar ideas in the context of adversarial generalization.

\noindent
\textbf{Semi-supervised learning for adversarial robustness.}
The observation that adversarial robustness can be optimized without labels was made independently and concurrently by \cite{Carmon_robustness, Najafi_robustness, Zhai_robustness}. 
Of particular interest, \cite{Carmon_robustness} proposes a meta-algorithm Robust Self-Training (RST), similar to UAT. Indeed, the particular instantiation of RST used in \cite{Carmon_robustness} and the fixed-target variant of UAT are nearly equivalent: the difference is whether the base algorithm minimizes the robust loss from \cite{Zhang19Trades} or the vanilla adversarial training objective \cite{madry18iclr}. Their results also provide strong, independent evidence that unlabeled and uncurated examples improve robustness on both \cifar{} and \svhn.

\section{Unsupervised Adversarial Training (UAT)}
	\label{sec:uat}
	
	In this section, we introduce and motivate our approach, Unsupervised Adversarial Training (UAT), which enables the use of unlabeled data to train robust classifiers.

	\noindent
	\textbf{Notation.}
	Consider the classification problem of learning a predictor $f_\theta$ to map inputs $x\in\imset$ to labels $y\in\labelset$.
	In this work, $f$ is of the form: $f_\theta(x)=\argmax_{y\in\labelset} p_\theta(y | x)$, where $p_\theta(.|x)$ is parameterized by a neural network.
	We assume data points $(x, y)$ are i.i.d. samples from the data-generating joint distribution $\supdist$ over $\imset \times \labelset$.
	$\unsupdist$ denotes the unlabeled distribution over $\imset$ obtained by marginalizing out $Y$ in $\supdist$.
	We assume access to a labeled training set $\supdata = \{(x_i, y_i)\}_{1 \leq i \leq n}$, where $(x_i, y_i) \sim \supdist$ and an unlabeled training set $\unsupdata = \{x_i\}_{1 \leq i \leq m}$, where $x_i \sim \unsupdist$.
	
	\textbf{Evaluation of Adversarial Robustness.}
	The \textit{natural risk} is $\loss_{nat}(\theta) = \mathbb{E}_{(x, y) \sim \supdist} \: \ell(y, f_\theta(x))$, where $\ell$ is the $0{-}1$ loss.
	Our primary objective is minimizing \textit{adversarial risk}: 
	$\loss_{adv}(\theta) = \mathev_{ \supdist } \, \sup_{x' \in \normball(x)} \ell(y, f_{\theta}(x'))$. 
	As is common, the neighborhood $\normball(x)$ is taken in this work to be the $L_\infty$ ball: $\normball(x) = \{x' : \norm{x' - x}_\infty \leq \epsilon \}$. 
	Because the inner maximization cannot be solved exactly, we report the surrogate adversarial risk $\loss_g(\theta) = \mathev_{ \supdist} \: \ell(f_\theta(x'), y)$, 
	where $x' = g(x, y, \theta)$ is an approximate solution to the inner maximization computed by some fixed adversary $g$.
	Typically, $g$ is (a variant of) projected gradient descent (PGD) with a fixed number of iterations.
	
	\subsection{Unsupervised Adversarial Training (UAT)}
	\label{subsec:uat}

\textbf{Motivation.}
As discussed in the introduction, a central challenge for adversarial training has been the difficulty of adversarial generalization.
Previous work has argued that adversarial generalization may simply require more data than natural generalization.
We ask a simple question: is more \emph{labeled} data necessary, or is \emph{unsupervised} data sufficient?
This is of particular interest in the common setting where unlabeled examples are dramatically cheaper to acquire than labeled examples ($m \gg n$).
For example, for large-scale image classification problems, unlabeled examples can be acquired by scraping images off the web, whereas gathering labeled examples requires hiring human labelers.

We now consider two algorithms to study this question. Both approaches are simple -- we emphasize the point that large unlabeled datasets can help bridge the gap between natural and adversarial generalization.
Later, in Sections \ref{sec:gaussian_model} and \ref{sec:experiments}, we show that both in a simple theoretical model and empirically, unlabeled data is in fact \emph{competitive} with labeled data. In other words, for a fixed number of additional examples, we observe similar improvements in adversarial robustness regardless of whether or not they are labeled.

	\noindent
	\textbf{Strategy 1: Unsupervised Adversarial Training  with Online Targets (UAT-OT).} 
We note that adversarial risk can be bounded as 
$\loss_{adv} = \loss_{nat} + (\loss_{adv} - \loss_{nat}) \leq \loss_{nat} + \mathev_{ \supdist } \, \sup_{x' \in \normball(x)} \ell(f_\theta(x'), f_\theta(x))$,
similarly to the decomposition in \cite{Zhang19Trades}.
We refer to the first term as the \emph{classification} loss, and the second terms as the \emph{smoothness} loss.
Even for adversarially trained models,
it has been observed that the smoothness loss dominates the classification loss on the test set, 
suggesting that controlling the smoothness loss is the key to adversarial generalization.
For example, the adversarially trained model in \cite{madry18iclr} achieves natural accuracy of 87\% but adversarial accuracy of 46\% on \cifar{} at $\eps=8/255$.

Notably, the smoothness loss has no dependence on labels, and thus can be minimized purely through unsupervised data.
	UAT-OT directly minimizes a differentiable surrogate of the smoothness loss on the unlabeled data.
	Formally, we use the loss introduced in~\cite{Miyato17VAT} and also used in \cite{Zhang19Trades}
	\begin{equation}
	\label{eq:uat_ot}
	\lossot(\theta) = \mathev_{x\sim\unsupdist} \sup_{x'\in\mathcal{N}_\epsilon(x)}\mathcal{D}(p_{\hat{\theta}}(.|x),p_\theta(.|x')),
	\end{equation}
	where $\mathcal{D}$ is the Kullback-Leibler divergence, 
	and $\hat{\theta}$ indicates a fixed copy of the parameters $\theta$ in order to stop the gradients from propagating.
	While \cite{Miyato17VAT}, which primarily focuses on natural generalization, uses a single step approximation of the inner maximization, we use an iterative PGD adversary, since prior work indicates strong adversaries are crucial for effective adversarial training \cite{kurakin17advScale,madry18iclr}.

	\noindent
\textbf{Strategy 2: Unsupervised Adversarial Training  with Fixed Targets (UAT-FT).} 
This strategy directly leverages the gap between standard generalization and adversarial generalization.
The main idea is to first train a \textit{base classifier} for standard generalization on the supervised set $\supdata$. 
Then, this model is used to estimate labels, hence \emph{fixed targets}, on the unsupervised set $\unsupdata$. 
This allows us to employ standard supervised adversarial training using these fixed targets.
Formally, it corresponds to using the following loss:
	\begin{equation}
\label{eq:unsup_FT}
\mathcal{L}^{FT}_{\text{unsup}}(\theta) = \mathev_{x\sim\unsupdist} \sup_{x'\in\mathcal{N}_\epsilon(x)}\xent(\hat{y}(x),p_\theta(.|x')),
\end{equation}
where \xent{} is the cross entropy loss and $\hat{y}(x)$ is a \emph{pseudo}-label obtained from a model trained for standard generalization on $\supdata$ alone.
Thus, provided a sufficiently large unlabeled dataset, \uatft{} recovers a smoothed version of the base classifier, which matches the predictions of the base classifier on clean data, while maintaining stability of the predictions within local neighborhoods of the data.
	
\noindent
\textbf{Overall training.}
For the overall objective, we use a weighted combination of the supervised loss and the chosen unsupervised loss, controlled by a hyperparameter $\lambda$:
$\mathcal{L}(\theta)=\losssup(\theta)+\lambda\mathcal{L}_{\text{unsup}}(\theta)$.
The unsupervised loss can be either  $\lossot{}$ (UAT-OT), $\mathcal{L}^{\text{FT}}_{\text{unsup}}$ (UAT-FT) or both (\uatpp).
Finally, note that the unsupervised loss can also be used on the samples of the supervised set by simply adding the $x_i$'s of $\supdata$ in $\unsupdata$.
The pseudocode and implmenetation details are described in Appendix \ref{sec:implementation_notes}.  

\subsection{Theoretical model}
\label{sec:gaussian_model}

To improve our understanding of the effects of unlabeled data, we study the simple setting proposed by \cite{Schmidt18moredata} to analyze the required sample complexity of adversarial robustness.

\begin{definition}[Gaussian model \cite{Schmidt18moredata}]
    Let $\theta^* \in \mathbb{R}^d$ be the per-class mean vector and let $\sigma > 0$ be the variance parameter.
    Then the $(\theta^*, \sigma)$-Gaussian model is defined by the following distribution
    over $(x, y) \in \mathbb{R}^d \times \{ \pm 1 \}$: 
    First, draw a label $y \in \{ \pm 1 \}$ uniformly at random. 
    Then sample the data point $x \in \mathbb{R}^d$ from $\mathcal{N}(y \cdot \theta^*, \sigma^2 I)$.
\end{definition}

In \cite{Schmidt18moredata}, this setting was chosen to model the empirical observation that adversarial generalization requires more data than natural generalization.
They provide an algorithm which achieves fixed, arbitrary (say, $1\%$) accuracy using a single sample.
However, to achieve the same adversarial accuracy, they show that any algorithm requires at least $c_1 \eps^2 \sqrt{d} \, / \log d$ samples and provide an algorithm requiring $n \geq c_2 \eps^2 \sqrt{d}$ samples, for fixed constants $c_1, c_2$.

Here, we show that this sample complexity can be dramatically improved by replacing labeled examples with unlabeled samples.
We first define an analogue of \uatft{} to leverage unlabeled data in this setting. 
For training an adversarially robust classifier, the algorithm in \cite{Schmidt18moredata} computes a sample mean of per-point estimates.
We straightforwardly adapt this procedure for unlabeled data, as in \uatft: we first estimate a base classifier from the labeled examples, then compute a sample mean using fixed targets from this base classifier.

\begin{definition}[Gaussian UAT-FT]
\label{def:uat_ft_gaussian}
	Given $n$ labeled examples $(x_1, y_1), \dots, (x_n, y_n)$ and $m$ unlabeled examples $x_{n+1}, \dots, x_{n+m}$,
    let $\hat{w}_\text{sup}$ denote the sample mean estimator on labeled examples:
    $\hat{w}_\text{sup} = \sum_{i=1}^n y_i x_i$.
    The \uatft{} estimator is then defined as the sample mean $\uatftest = \sum_{i=n+1}^{n+m} \hat{y}_i x_i$
    where $\hat{y}_i = f_{\hat{w}_\text{sup}}(x_i)$.
\end{definition}

Theorem \ref{thm:uat_ft_gaussian} states that in contrast to the purely supervised setting which requires $O(\sqrt{d} \, / \log d \, )$ examples,
in the semi-supervised setting, a single labeled example, along with $O(\sqrt{d} \, )$ examples are sufficient to achieve fixed, arbitrary accuracy.

\begin{thm}
\label{thm:uat_ft_gaussian}
	Consider the $(\theta^*, \sigma)$-Gaussian model with $\norm{\theta^*}_2 = \sqrt{d}$ and $\sigma \leq \frac{1}{32}d^{\sfrac{1}{4}}$.
    Let $\uatftest$ be the the \uatft{} estimator as in Definition \ref{def:uat_ft_gaussian}. 
    Then with high probability, for $n=1$, the linear classifier $f_{\uatftest}$ has $\ell_{\infty}^{\epsilon}$-robust classification error at most $1\%$ if
    $$m \geq c \epsilon^2 \sqrt{d}$$
\end{thm}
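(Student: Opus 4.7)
The plan is to reduce the theorem to the supervised sample-complexity bound of~\cite{Schmidt18moredata} by establishing that, with high probability, the base classifier trained on the single labeled example correctly pseudo-labels every one of the $m$ unlabeled samples. On this good event, the \uatft{} estimator $\uatftest = \sum_{i=n+1}^{n+m} \hat{y}_i x_i$ coincides exactly with the supervised sample-mean estimator $\sum_{i=n+1}^{n+m} y_i x_i$, i.e., a sum of $m$ i.i.d.\ draws from $\mathcal{N}(\theta^*, \sigma^2 I)$, to which the existing $\ell_\infty^\eps$-robust upper bound of~\cite{Schmidt18moredata} applies verbatim.

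The core technical step is to control the per-sample misclassification probability $\eta$ of the base classifier $\hat{w}_\text{sup} = y_1 x_1 = \theta^* + z_1$, where $z_1 \sim \mathcal{N}(0, \sigma^2 I)$. First I would condition on a typical realization of $z_1$, say $\|z_1\|_2 \leq 2\sigma\sqrt{d}$ and $|\langle z_1, \theta^*\rangle| \leq O(\sigma\sqrt{d \log d})$, both of which hold except on an event of negligible probability by standard Gaussian concentration. Conditional on $z_1$ and the true label $y=1$, the signed margin $\langle \hat{w}_\text{sup}, x\rangle$ on a fresh sample $x = \theta^* + \xi$ is Gaussian in $\xi$ with mean at least $d - O(\sigma\sqrt{d\log d})$ and variance $\sigma^2 \|\theta^* + z_1\|_2^2 \leq \sigma^2 d (1 + O(\sigma^2))$. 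The assumption $\sigma \leq d^{1/4}/32$, together with $\|\theta^*\|_2 = \sqrt{d}$, keeps the resulting signal-to-noise ratio at least a large absolute constant (and in fact $\Omega(d^{1/4})$ whenever $\sigma = o(1)$), so the Gaussian tail bound yields $\eta$ extremely small.

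A union bound over the $m$ i.i.d.\ unlabeled samples then implies that $\hat{y}_i = y_i$ for every $i$ except with probability at most $m\eta$, which is easily made $\leq 0.5\%$ since $m = \Theta(\eps^2 \sqrt{d})$ is polynomial in $d$ while $\eta$ is super-polynomially (or at least by a huge constant factor) smaller. Conditional on this event, $\uatftest \sim \mathcal{N}(m\theta^*, m\sigma^2 I)$, and because $f_{\hat{w}}$ is scale-invariant we can equivalently analyze $\bar{w} = m^{-1}\uatftest \sim \mathcal{N}(\theta^*, (\sigma^2/m) I)$, which is exactly the supervised sample-mean estimator on $m$ i.i.d.\ labeled examples. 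Applying the supervised $\ell_\infty^\eps$-robust upper bound of~\cite{Schmidt18moredata} with $m \geq c\eps^2\sqrt{d}$ gives robust classification error at most $0.5\%$, and a final union bound with the pseudo-labeling failure event completes the $1\%$ bound.

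The principal obstacle is proving $\eta$ is small enough despite having only $n=1$ labeled example: the single sample produces a weight vector whose direction is genuinely random with respect to $\theta^*$, so $\hat{w}_\text{sup}$ cannot be approximated by $\theta^*$ itself. The key observation that rescues the argument is that $\hat{w}_\text{sup}$ need not be adversarially robust at all; only its natural (clean) accuracy on fresh points must be high enough to survive the polynomial union bound. The specific constraint $\sigma \leq d^{1/4}/32$ is calibrated precisely so that the base classifier's natural signal-to-noise ratio is at least $\Omega(d^{1/4})$, pushing $\eta$ below any polynomial in $1/d$.
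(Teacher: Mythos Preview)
Your reduction is clean, but it breaks precisely at the boundary $\sigma = \Theta(d^{1/4})$, which is the regime that makes the theorem interesting. The claim in your final paragraph that the base classifier's signal-to-noise ratio is $\Omega(d^{1/4})$ is incorrect: conditional on a typical $z_1$, the margin on a fresh point has mean $\approx d$ but standard deviation $\sigma\|\theta^*+z_1\|_2 \approx \sigma\sqrt{d}\sqrt{1+\sigma^2}$. When $\sigma = d^{1/4}/32$ the factor $\sqrt{1+\sigma^2}$ is itself $\Theta(d^{1/4})$, so the SNR collapses to $\sqrt{d}/(\sigma\sqrt{1+\sigma^2}) = \Theta(1)$ (a large absolute constant, roughly $32^2$, but a constant). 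Hence $\eta$ is a fixed tiny number that does \emph{not} decay with $d$, while $m = c\eps^2\sqrt{d}$ grows without bound for fixed $\eps$. The union bound $m\eta$ therefore diverges, and the event ``every pseudo-label is correct'' cannot be guaranteed with uniformly high probability. Your hedge ``or at least by a huge constant factor'' is exactly the failure mode: a constant-factor gap cannot absorb a polynomially growing $m$.

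The paper's proof avoids this by never requiring all pseudo-labels to be correct. It only uses that the base classifier has accuracy $p \geq 0.99$ (which \emph{does} follow from a single labeled example under $\sigma \le d^{1/4}/32$, via Schmidt et al.), and then controls $\|\bar z\|_2$ and $\langle \bar z,\theta^*\rangle$ directly in the presence of up to a $1\%$ fraction of wrong labels. The norm bound takes a union over all $2^m$ sign patterns $(\hat y_1,\ldots,\hat y_m)$; the inner-product bound uses a Chernoff bound on the count $\sum_i \mathbb{I}[\hat y_i = y_i]$ together with a worst-case sign choice on the noise terms $\langle z_i,\theta^*\rangle$. Both lemmas therefore tolerate a constant mislabeling rate, which is exactly what the constant-SNR regime forces. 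Your approach would go through unchanged if the assumption were strengthened to, say, $\sigma = O(1)$ or $\sigma \le d^{1/4}/\sqrt{\log d}$, but as stated the theorem needs the label-noise-robust analysis.
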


where $c$ is a fixed, universal constant. The proof is deferred to Appendix \ref{sec:proof_gaussian_model}.
For ease of comparison, we consider the same Gaussian model parameters $\norm{\theta^*}_2$ and $\sigma$ as used in \cite{Schmidt18moredata}.
The sample complexity in Theorem \ref{thm:uat_ft_gaussian} matches the sample complexity of the algorithm provided in \cite{Schmidt18moredata} up to constant factors, despite using unlabeled rather than labeled examples.
We now turn to empirical investigation of whether this result is reflected in practical settings.

\section{Experiments}
\label{sec:experiments}

In section~\ref{subsec:exp_control}, we first investigate our primary question: for adversarial robustness, can unlabeled examples be competitive with labeled examples?
These operate in the standard semi-supervised setting where we use a small fraction of the original training set as $\supdata$, and provide varying amounts of the remainder as $\unsupdata$.
After observing high robustness, particularly for \uatft{} and \uatpp, we run several controlled experiments in section~\ref{sec:label_noise_analysis} to understand why this approach works well.
In section~\ref{sec:real_world_dist_shift}, we explore the robustness of UAT to shift in the distribution $\unsupdist$.
Finally, we use UAT to improve existing state-of-the-art adversarial robustness on \cifar, using the \tinydsfullname{} dataset as our source of unlabeled data.

\subsection{Adversarial robustness with few labels}
\label{subsec:exp_control}

\textbf{Experimental setup.} 
We run experiments on the \cifar{} and \svhn{} datasets, with $L_\infty$ constraints of $\eps=8/255$ and $\eps=0.01$ respectively, which are standard for studying adversarial robustness of image classifiers \cite{gowal2018effectiveness,madry18iclr,Zhang19Trades,wong18scaling}.
For adversarial evaluation, we report against 20-step iterative FGSM \cite{kurakin17advScale}, for consistency with previous state-of-the-art \cite{Zhang19Trades}.
In our later experiments for Section \ref{subsec:exp_80m}, we also evaluate against a much stronger attack, \multitargeted~\cite{gowal2019alternative}, which provides a more accurate proxy for the adversarial risk.
As we demonstrate in Appendix \ref{sec:multitargeted_attack}, the \multitargeted{} attack is significantly stronger than an expensive PGD attack with random restarts, which is in turn significantly stronger than \fgsm.
We follow previous work \cite{madry18iclr,Zhang19Trades} for our choices of model architecture, data preprocessing, and hyperparameters, which are detailed in Appendix \ref{sec:experimental_details}.

To study the effect of unlabeled data, we randomly split the existing training set into a small supervised set $\supdata$ and use the remaining $N-n$ training examples as a source of unlabeled data.
We also split out $10000$ examples from the training set to use as validation, for both \cifar{} and \svhn, since neither dataset comes with a validation set.
We then study the effect on robust accuracy of increasing $m$, the number of unsupervised samples, across different regimes ($m \approx n$ vs. $m \gg n$).

\noindent
\textbf{Baselines.}
 We compare results with the two strongest existing supervised approaches, standard adversarial training~\cite{madry18iclr} and TRADES~\cite{Zhang19Trades}, which do not use unsupervised data.
We also compare to VAT~\cite{Miyato17VAT}, which was designed for \emph{standard} semi-supervised learning but can be adapted for unsupervised adversarial training as explained in Appendix~\ref{app:vat}. 
Finally, to compare the benefits of labeled and unlabeled data, we compare to the \emph{supervised oracle}, which represents the best possible performance, where the model is provided the ground-truth label even for samples from \unsupdata.

\noindent
\subsubsection{Main results}
\label{sec:control_exp_analysis}

We first test the hypothesis that for adversarial robustness, additional unlabeled data is competitive with additional labeled data. 
Figure \ref{fig:control_exp} summarizes the results.
We report the adversarial accuracy for varying $m$, when $n$ is fixed to $4000$ and $1000$, for \cifar{} and \svhn{} respectively.

\textbf{Comparison to baselines.}
All models show significant improvements in adversarial robustness over the baselines for all numbers of unsupervised samples.
With the maximum number (32k / 60k) of unlabeled images, even the weakest UAT model, \uatot{}, shows {12.9\% / 16.9\%} %
improvement over the baselines not leveraging unlabeled data, and {6.4\% / 1.6\%} %
improvement over VAT on \cifar{} and SVHN, respectively.

\textbf{Comparison between UAT variants.} 
We compare the results of 3 different UAT variants: \uatot{}, \uatft{}, and \uatpp{}. 
Comparing \uatft{} and \uatot{}, when there are larger number of unsupervised samples, we observe that the \uatft{} shows a significant improvement compared to \uatot{} on \cifar{}, \uatot{} performs similarly to \uatft{} on SVHN.
With smaller numbers of unsupervised samples, the two approaches perform similarly.
Empirically, we observe that \uatpp{}, which combines the two approaches, outperforms either individually.
We thus primarily use \uatpp{} for our later experiments.

\textbf{Comparison to the oracle.}
Figure \ref{fig:control_exp} provides strong support for our main hypothesis. In particular, we observe that when using large unsupervised datasets, \uatpp{} performs nearly as well as the supervised oracle.
In Fig. \ref{fig:control_exp}a, with 32K unlabeled examples, \uatpp{} achieves {54.1\%} on \cifar{}, which is {1.4\%} %
lower than the supervised oracle. 
Similarly, with 60K unlabeled data, in Fig. \ref{fig:control_exp}b,  \uatpp{} achieves {84.4\%} on SVHN which is {1.8\%} lower than the supervised oracle. %

\textbf{Conclusion.} We demonstrate that, leveraging large amounts of unlabeled examples, \uatpp{} achieves similar adversarial robustness to supervised oracle, which uses label information.
In particular, without requiring labels, \uatpp{} captures over {97.6\% / 97.9\%} %
of the improvement from 32K / 60K additional examples compared with supervised oracle on \cifar{} and SVHN, respectively.

\begin{figure*}[t]
	\centering
	\begin{subfigure}[t]{.485\linewidth}
		\includegraphics[height=5.0cm]{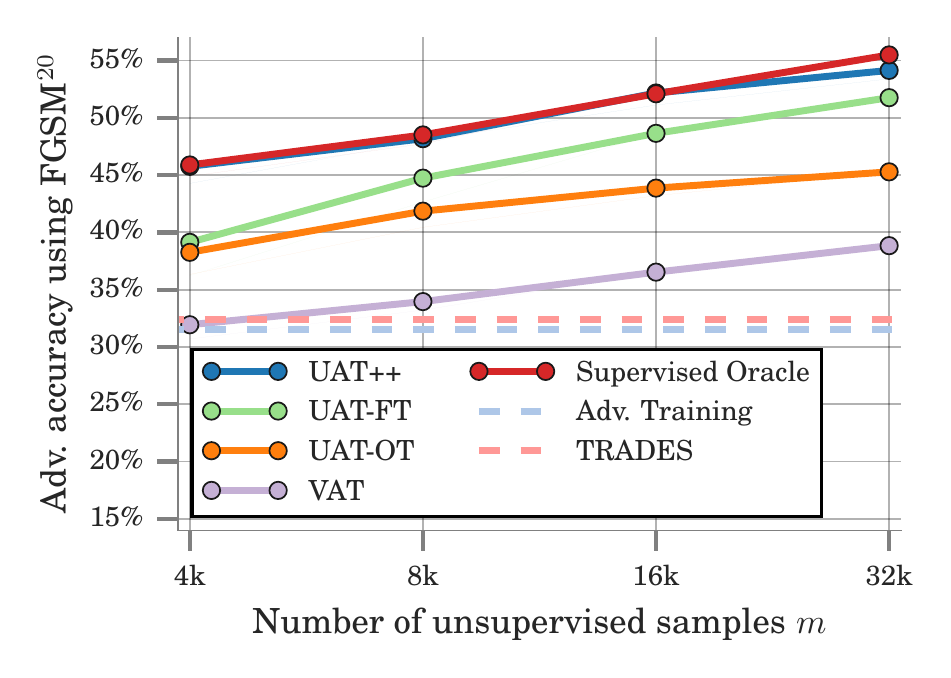} %
	\end{subfigure}
	\hfill
	\begin{subfigure}[t]{.485\linewidth}
		\includegraphics[height=5.0cm]{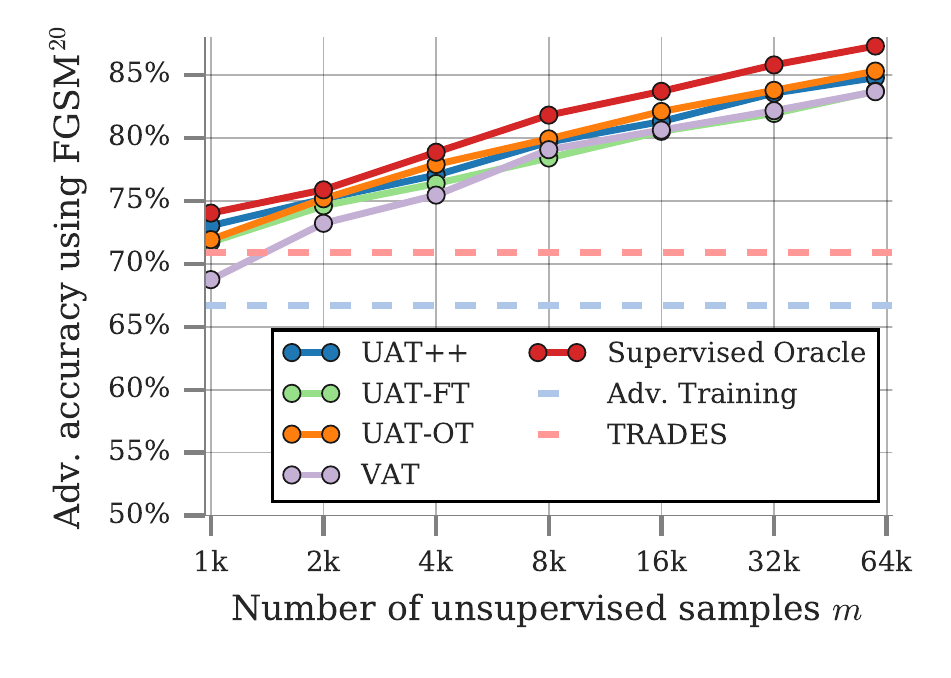} %
	\end{subfigure}
	\caption{\small Comparison of labeled data and unsupervised data for improving adversarial generalization on \cifar{} (\textbf{left,a}) and SVHN (\textbf{right,b})\label{fig:control_exp}} 
\end{figure*}

\subsubsection{Label noise analysis}
\label{sec:label_noise_analysis}

Given the effectiveness of \uatft{} and \uatpp, we perform an ablation study on the impact of label noise on UAT for adversarial robustness.

\textbf{Experimental setup.} 
To do so, we first divide the \cifar{} training set into halves, where the first 20K examples are used for training the base classifier and the latter 20K are used to train a UAT model.
Of the latter 20K, we treat 4K examples as labeled, as in Section \ref{sec:control_exp_analysis}, and the remaining 16K as unlabeled.
We consider two different approaches to introducing label noise.
For \uatft{} (Correlated), we produce pseudo-labels using the \uatft{} procedure, where the number of training examples used for the base classifier varies between between 500 and 20K.
This produces base classifiers with error rates between 7\% and 48\%.
For \uatft{} (Random), we randomly flip the label to a randomly selected incorrect class.
The results are shown in Figure \ref{fig:label_noise}.

\textbf{Analysis.}
In Fig. \ref{fig:label_noise}a, in the \uatft{} (Random) case, adversarial accuracy is relatively flat between 1\% and 20\%.
Even with 50\% of the examples mislabeled, the decrease in robust accuracy is less than 10\%. %
At the highest level of noise,  \uatft{} still obtains a 8.0\% %
improvement in robustness accuracy over the strongest baseline which does not exploit unsupervised data.
Similarly, in the \uatft{} (Correlated) case, robust accuracy is relatively flat between 7\% and 23\% noise level, and even at 48\% corrupted labels, \uatft{} outperforms the purely supervised baselines by 6.3\%. %

To understand these results, we believe that the main function of the unsupervised data in UAT is to improve generalization of the smoothness loss, rather than the classification loss.
While examples with corrupted labels have limited utility for improving classification accuracy, they can still be leveraged to improve the smoothness loss.
This is most obvious in \uatot{}, which has no dependence on the predicted labels (and is thus a flat line in Figure \ref{fig:label_noise}a).
However, Figure \ref{fig:label_noise}a supports the hypothesis that \uatft{} also works similarly, given its effectiveness even in cases where up to half of the labels are corrupted.
As mentioned in Section \ref{sec:uat}, because generalization gap of the classification loss is typically already small, controlling generalization of the smoothness loss is key to improved adversarial robustness.

\textbf{Comparison to standard generalization.}
We compare the robustness of \uatft{} to label noise, to an analogous pseudo-labeling technique applied to natural generalization.
Comparing between Figures \ref{fig:label_noise}a and \ref{fig:label_noise}b, we observe that with increasing label noise, the rate of degradation in robustness of adversarial trained models is much lower than the rate of degradation in accuracy of models obtained with standard training.
In particular, while standard training procedures can be robust to random label noise, as observed in previous work \cite{patrini17robusNoise,rolnick17LabelNoise},
accuracy decreases almost one-to-one (slope -0.78) with correlated errors.
This is natural, as with a very large unsupervised dataset, we expect to recover the base classifier (modulo the 4k additional supervised examples).

\textbf{Conclusion.} 
UAT shows significant robustness to label noise, achieving an 8.0\% improvement over the baseline even with nearly 50\% error in the base classifier.
We hypothesize that this is primarily because UAT operates primarily on the smoothness loss, rather than the classification loss, and is thus less dependent on the pseudo-labels.

\begin{figure*}[t!]
	\centering
	\begin{subfigure}[t]{.48\linewidth}
		\centering
		\includegraphics[height=4.9cm]{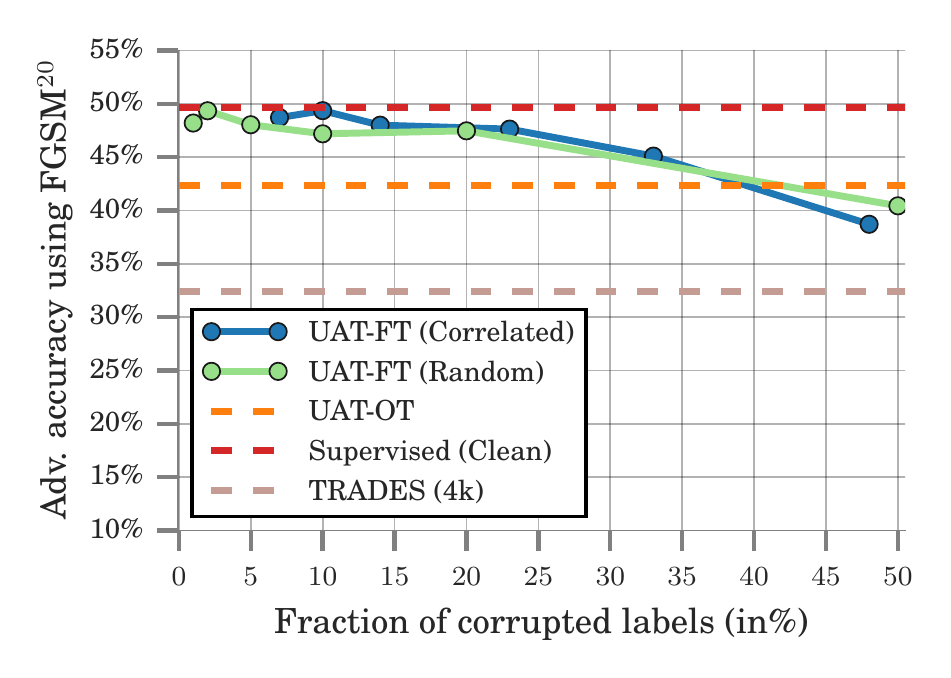} %
	\end{subfigure}
	\hfill
	\begin{subfigure}[t]{.48\linewidth}
		\centering
		\includegraphics[height=4.9cm]{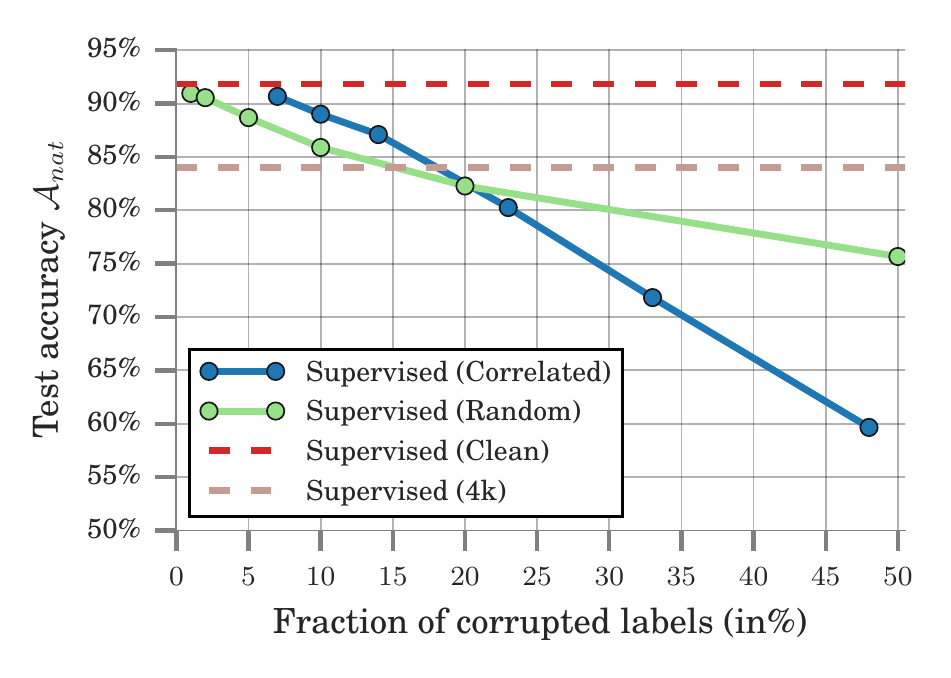} %
	\end{subfigure}
	\caption{\small Effects of label noise on adversarial \textbf{(left, a}) and natural \textbf{(right, b)} accuracies, on \cifar{} \label{fig:label_noise}} 
\end{figure*}

\subsection{Unsupervised data with distribution shift}
\label{sec:real_world_dist_shift}

\noindent
\textbf{Motivation.}
In Section~\ref{subsec:exp_control}, we studied the standard semi-supervised setting, where \unsupdist{} is the marginal of the joint distribution \supdist{}.
As pointed out in~\cite{Oliver18Realistic}, real-world unlabeled datasets may involve varying degrees of distribution shift from the labeled distribution.
For example, images from \cifar{}, even without labels, required human curation to not only restrict to images of the choosen 10 classes but also to ensure that selected images were photo-realistic (line drawings were rejected) or that only one instance of the object was present (see Appendix C of~\cite{cifar10} for the full labeler instruction sheet).
We thus study whether our approach is robust to such distribution shift, allowing us to fully leverage data which is not only unlabeled, but also uncurated.

\noindent
We use the \textbf{\tinydsfullname{}}~\cite{80m} dataset (hereafter, \tinyds{}) as our uncurated data source, a large dataset obtained by web queries for 75,062 words.
Because collecting this dataset required no human filtering, it provides a perfect example of uncurated data that is cheaply available at scale.
Notably, \cifar{} is a human-labeled subset of \tinyds, which has been restricted to 10 classes.

\noindent
\textbf{Preprocessing.}
Because the majority of \tinyds{} contains images distinct from the \cifar{} classes, we apply an automated filtering technique similar to~\cite{Xie19UAD}, detailed in Appendix \ref{app:dataset_details}.
Briefly, we first restrict to images obtained from web queries matching the \cifar{} classes, and filter out near duplicates of the \cifar{} test set using GIST features \cite{oliva2001modeling,douze2009evaluation}.
For each class, we rank the images based on the prediction confidence from a WideResNet-28-10 model pretrained on the \cifar{} dataset. 
We then take the top 10k, 20k, or 50k images per class, to create the \bigunsup{100}, \bigunsup{200}, and \bigunsup{500} datasets, respectively.

\noindent
\textbf{Overview.}
We first conduct a preliminary study on the impact of distribution shift in a low data regime in Section~\ref{sec:dist_shift}, and we finally demonstrate how UAT can be used to leverage large scale realistic uncurated data in Section~\ref{subsec:exp_80m}.

\subsubsection{Preliminary study: Low data regime}
\label{sec:dist_shift}

To study the effect of having unsupervised data from a different distribution, we repeat the same experimental setup described in Section~\ref{sec:control_exp_analysis} where we draw \unsupdata{} from \bigunsup{200} rather than \cifar{}.
Results are given in Figure \ref{fig:dist_shift}.
\begin{wrapfigure}{R}{0.44\textwidth}
  \vspace*{-0.25cm}
	\begin{center}
  		\includegraphics[width=0.44\textwidth]{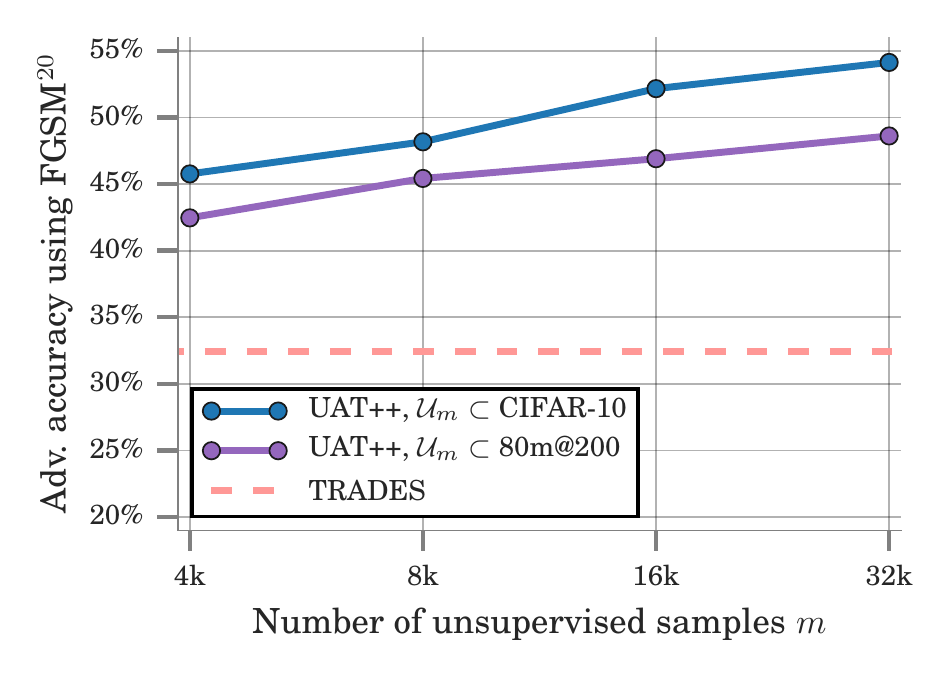}
  \caption{Distribution shift on \cifar \label{fig:dist_shift}}
	\end{center}
  \vspace{-1cm}
\end{wrapfigure}
For simplicity, we report our best performing method, UAT++, in both settings: $\unsupdata \, {\subset}$ \bigunsup{200} and $\unsupdata\,{\subset}$ \cifar{}.
First, we observe that when using 32K images from unsupervised data, either  \uatpp{} (\bigunsup{200}) or  \uatpp{} (\cifar{}) outperforms the baseline, TRADES \cite{Zhang19Trades}, which only uses the 4K supervised examples.
Specifically, \uatpp{} with \bigunsup{200}  achieves 48.6\% robust accuracy, a 16.2\% %
improvement over TRADES.
On the other hand, \uatpp{} performs substantially better when \unsupdata{} is drawn from \cifar{} rather than \bigunsup{200}, by a margin of 5.5\% %
with 32K unlabeled examples.

\textbf{Conclusion.} While unlabeled data from the same distribution is significantly better than off-distribution unlabeled data, the off-distribution unlabeled data is still much better than no unsupervised data at all.
In the next section, we explore scaling up the off-distribution case.

\subsubsection{Large scale regime}
\label{subsec:exp_80m}

\begin{table}[]
	\centering
	\resizebox{.9\textwidth}{!}{%
\begin{tabular}{@{}rcccccc@{}}
\toprule
Method                              & Sup. Data & Unsup. Data    & Network       & \accnat & \accfgsm & \accmulti \\ \midrule
\cite{wong18scaling}                & \cifar{}   & \xmark         & -         & 27.07\% & 23.54\% & - \\
AT \cite{madry18iclr}                  & \cifar{}   & \xmark         & \wresnet{28}         & 87.30\% & 47.04\% & 44.54\%\\
\cite{zheng16stabTraining}          & \cifar{}   & \xmark         & -         & 94.64\% & 0.15\%  & -\\
\cite{kurakin17advScale}            & \cifar{}   & \xmark         & -         & 85.25\% & 45.89\% & -\\
\cite{hendrycks19pre} & ImageNet + \cifar{}   & \xmark   & \wresnet{28}  & 87.1\% & 57.40\% & $\leq$52.9\%*\\
AT-Reimpl.~\cite{madry18iclr} & \cifar{}   & \xmark         & \wresnet{34}  & 87.08\% & 52.93\% & 47.10\%\\
TRADES~\cite{Zhang19Trades}         & \cifar{}   & \xmark         & \wresnet{34}  & 84.92\% & 57.11\% & 52.58\%\\ %
\midrule
UAT++                               & \cifar{}   & \bigunsup{100} & \wresnet{34}  & 86.04\% & 59.41\% & 52.64\% \\
UAT++                               & \cifar{}  & \bigunsup{200} & \wresnet{34}  & 85.85\% & \textbf{62.18\%} & \textbf{53.35\%}\\
UAT++                               & \cifar{}   & \bigunsup{500} & \wresnet{34}  & 78.34\% & 58.04\% & 48.99\% \\\midrule

UAT++                               & \cifar{}   & \bigunsup{200} & \wresnet{70}  &  86.75\%   & 62.89\% & 55.04\%  \\
UAT++                               & \cifar{}   & \bigunsup{200} & \wresnet{106} & 86.46\% & \textbf{63.65\%}   & \textbf{56.30\%} \\ %

\bottomrule
\end{tabular}
	}
\vspace*{0.1cm}
\caption{\small Experimental results using 80m Tiny Images dataset (as a unsupervised data)  and \cifar{} (as supervised data), where \accnat~represents the original test accuracy, \accfgsm~represents the adversarial accuracy under 20 step FGSM, and \accmulti~represents the adversarial accuracy under the strong \texttt{MultiTargeted} attack. \wresnet{$k$} denotes the Wide-ResNet with depth $k$. `*' indicates it is from~\cite{hendrycks19pre} using 100 PGD steps with 1000 random restarts, an attack that we have found to be weaker than the \texttt{MultiTargeted} attack. \label{tab:uat_largeScale}}
\end{table}

We now study whether uncurated data alone can be leveraged to improve the state-of-the-art for adversarial robustness on \cifar.
For these experiments, we use subsets of \tinyds{} in conjunction with the full \cifar{} training set.
Table \ref{tab:uat_largeScale} summarizes the results. 
We report adversarial accuracies against two attacks.
First, we consider the FGSM \cite{goodfellow2014explaining, kurakin17advScale} attack with 20 steps (\fgsm) to allow for direct comparison with previous state-of-the-art \cite{Zhang19Trades}. 
Second, we evaluate against the \texttt{MultiTargeted} attack, which we find to be significantly stronger than the commonly used PGD attack with random restarts. Details are provided in Appendix~\ref{sec:multitargeted_attack}.

\noindent
\textbf{Baselines.}
For baseline models, we evaluate the models released by \cite{madry18iclr,Zhang19Trades}.
For fair comparison with our setup, we also reimplement adversarial training (AT-Reimpl. \cite{madry18iclr}) using the same attack we use for UAT, which we found to be slightly more efficient than the original attack.
This is detailed in Appendix~\ref{app:pgd_details}.
We also compare to~\cite{hendrycks19pre}, which uses more \emph{labeled} data by pretraining on ImageNet.
All other reported numbers are taken from \cite{Zhang19Trades}. 

\noindent
\textbf{Comparison with same model.}
First, we compare \uatpp{} with three different sets of unsupervised data (\bigunsup{100}, \bigunsup{200}, and \bigunsup{500}) using the same model architecture (\wresnet{34}) as in TRADES.
In all cases, we outperform TRADES under \fgsm. 
When using \bigunsup{200}, we improve upon TRADES by 5.07\% under \fgsm and 0.77\% under the \texttt{MultiTargeted} attack.  
We note the importance of leveraging more unsupervised data when going from \bigunsup{100} to \bigunsup{200}.
However, performance degrades when using \bigunsup{500} which we attribute to the fact that \bigunsup{500} contains significantly more out-of-distribution images.
Finally, comparing with the recent work of~\cite{hendrycks19pre}, we note that using more unsupervised data can outperform using additional supervised data for pretraining.

\noindent
\textbf{Further analysis.}
We run several additional checks against gradient masking \cite{athalye2018obfuscated, tramer2017ensemble, uesato2018adversarial}, detailed in Appendix~\ref{app:adv_eval_details}.
We show that a gradient-free attack, SPSA \cite{uesato2018adversarial}, does not lower accuracy compared to untargeted PGD (Appendix~\ref{app:spsa_eval}),
visualize loss landscapes (Appendix~\ref{sec:loss_landscape}),
and empirically analyze attack convergence (Appendix~\ref{sec:attack_convergence}).
Overall, we do not find evidence that other attacks could outperform the \multitargeted{} attack.

\noindent
\textbf{A new state-of-the-art on \cifar{}.}
Finally, when using these significantly larger training sets, we observe significant underfitting, where robust accuracy is low even on the training set.
We thus also explore using deeper models.
We observe that \uatpp{} trained on the \bigunsup{200} unsupervised dataset using \wresnet{106} achieves state-of-the-art performance, +6.54\% under \fgsm and +3.72\% against \texttt{MultiTargeted} attack, compared to TRADES \cite{Zhang19Trades}.
Our trained model is available on our repository.\footnote{\url{https://github.com/deepmind/deepmind-research/tree/master/unsupervised_adversarial_training}}

\section{Conclusion}	
Despite the promise of adversarial training, its reliance on large numbers of labeled examples has presented a major challenge towards developing robust classifiers.
In this paper, we hypothesize that annotated data might not be as important as commonly believed for training adversarially robust classifiers. 
To validate this hypothesis, we introduce two simple UAT approaches which we tested on two standard image classification benchmarks.  
These experiments reveal that indeed, one can reach near state-of-the-art adversarial robustness with as few as 4K labels for \cifar{} (10 times less than the original dataset) and as few as 1K labels for SVHN (100 times less than the original dataset). 
Further, we demonstrate that our method can also be applied to uncurated data obtained from simple web queries.
This approach improves the state-of-the-art on \cifar{} by 4\% against the strongest known attack. 
These findings open a new avenue for improving adversarial robustness using unlabeled data.
We believe this could be especially important for domains such as medical applications, where robustness is essential and gathering labels is particularly costly~\cite{finlayson19medicaladv}.

\bigskip
\textbf{Acknowledgements.}
We would like to especially thank Sven Gowal for helping us evaluate with the \texttt{MultiTargeted} attack and for the loss landscape visualizations, as well as insightful discussions throughout this project. 
We would also like to thank Andrew Zisserman, Catherine Olsson, Chongli Qin, Relja Arandjelovi\'{c}, Sam Smith, Taylan Cemgil, Tom Brown, and Vlad Firoiu for helpful discussions throughout this work.

{\small
	\bibliographystyle{ieee}
	\bibliography{biblio}

\begin{thebibliography}{10}\itemsep=-1pt

\bibitem{athalye2018obfuscated}
A.~Athalye, N.~Carlini, and D.~Wagner.
\newblock Obfuscated gradients give a false sense of security: Circumventing
  defenses to adversarial examples.
\newblock {\em arXiv preprint arXiv:1802.00420}, 2018.

\bibitem{attias2018improved}
I.~Attias, A.~Kontorovich, and Y.~Mansour.
\newblock Improved generalization bounds for robust learning.
\newblock In {\em ALT}, 2018.

\bibitem{bachman14pseudoensemble}
P.~Bachman, O.~Alsharif, and D.~Precup.
\newblock Learning with pseudo-ensembles.
\newblock In {\em NeurIPS}, 2014.

\bibitem{berthelot19mixmatch}
D.~Berthelot, N.~Carlini, I.~Goodfellow, N.~Papernot, A.~Oliver, and C.~Raffel.
\newblock {MixMatch: A Holistic Approach to Semi-Supervised Learning}.
\newblock {\em arXiv:1905.02249}, 2019.

\bibitem{biggio2013evasion}
B.~Biggio, I.~Corona, D.~Maiorca, B.~Nelson, N.~{\v{S}}rndi{\'c}, P.~Laskov,
  G.~Giacinto, and F.~Roli.
\newblock Evasion attacks against machine learning at test time.
\newblock In {\em Joint European Conference on Machine Learning and Knowledge
  Discovery in Databases}, pages 387--402. Springer, 2013.

\bibitem{blum1998combining}
A.~Blum and T.~Mitchell.
\newblock Combining labeled and unlabeled data with co-training.
\newblock In {\em Proceedings of the Eleventh Annual Conference on
  Computational Learning Theory}, pages 92--100. ACM, 1998.

\bibitem{e2eselfdrivingcar}
M.~Bojarski, D.~D. Testa, D.~Dworakowski, B.~Firner, B.~Flepp, P.~Goyal, L.~D.
  Jackel, M.~Monfort, U.~Muller, J.~Zhang, X.~Zhang, J.~Zhao, and K.~Zieba.
\newblock End to end learning for self-driving cars.
\newblock {\em CoRR}, abs/1604.07316, 2016.

\bibitem{carlini2017adversarial}
N.~Carlini and D.~Wagner.
\newblock Adversarial examples are not easily detected: Bypassing ten detection
  methods.
\newblock In {\em Proceedings of the 10th ACM Workshop on Artificial
  Intelligence and Security}, pages 3--14. ACM, 2017.

\bibitem{carlini2017towards}
N.~Carlini and D.~Wagner.
\newblock Towards evaluating the robustness of neural networks.
\newblock In {\em Security and Privacy (SP), 2017 IEEE Symposium on}, pages
  39--57. IEEE, 2017.

\bibitem{Carmon_robustness}
Y.~{Carmon}, A.~{Raghunathan}, L.~{Schmidt}, P.~{Liang}, and J.~C. {Duchi}.
\newblock {Unlabeled Data Improves Adversarial Robustness}.
\newblock In {\em NeurIPS}, 2019.

\bibitem{chapelleSSL}
O.~Chapelle, B.~Scholkopf, and A.~Zien.
\newblock {\em Semi-Supervised Learning}.
\newblock MITPress, 2009.

\bibitem{chen18trinet}
D.-D. Chen, W.~Wang, W.~Gao, and Z.-H. Zhou.
\newblock Tri-net for semi-supervised deep learning.
\newblock In {\em IJCAI}, 2018.

\bibitem{covington2016deep}
P.~Covington, J.~Adams, and E.~Sargin.
\newblock Deep neural networks for {Y}outube recommendations.
\newblock In {\em Proceedings of the 10th ACM Conference on Recommender
  Systems}, pages 191--198. ACM, 2016.

\bibitem{de2018clinically}
J.~De~Fauw, J.~R. Ledsam, B.~Romera-Paredes, S.~Nikolov, N.~Tomasev,
  S.~Blackwell, H.~Askham, X.~Glorot, B.~O’Donoghue, D.~Visentin, et~al.
\newblock Clinically applicable deep learning for diagnosis and referral in
  retinal disease.
\newblock {\em Nature Medicine}, 24(9):1342, 2018.

\bibitem{douze2009evaluation}
M.~Douze, H.~J{\'e}gou, H.~Sandhawalia, L.~Amsaleg, and C.~Schmid.
\newblock Evaluation of gist descriptors for web-scale image search.
\newblock In {\em Proceedings of the ACM International Conference on Image and
  Video Retrieval}, page~19. ACM, 2009.

\bibitem{finlayson19medicaladv}
S.~G. Finlayson, J.~D. Bowers, J.~Ito, J.~L. Zittrain, A.~L. Beam, and I.~S.
  Kohane.
\newblock {Adversarial attacks on medical machine learning}.
\newblock {\em Science}, 2019.

\bibitem{goodfellow2014explaining}
I.~J. Goodfellow, J.~Shlens, and C.~Szegedy.
\newblock Explaining and harnessing adversarial examples.
\newblock {\em arXiv preprint arXiv:1412.6572}, 2014.

\bibitem{gowal2018effectiveness}
S.~Gowal, K.~Dvijotham, R.~Stanforth, R.~Bunel, C.~Qin, J.~Uesato,
  R.~Arandjelovic, T.~Mann, and P.~Kohli.
\newblock On the effectiveness of interval bound propagation for training
  verifiably robust models.
\newblock {\em arXiv preprint arXiv:1810.12715}, 2018.

\bibitem{gowal2019alternative}
S.~Gowal, J.~Uesato, C.~Qin, P.-S. Huang, T.~Mann, and P.~Kohli.
\newblock An alternative surrogate loss for pgd-based adversarial testing.
\newblock 2019.

\bibitem{gu2014towards}
S.~Gu and L.~Rigazio.
\newblock Towards deep neural network architectures robust to adversarial
  examples.
\newblock {\em arXiv preprint arXiv:1412.5068}, 2014.

\bibitem{hendrycks19pre}
D.~Hendrycks, K.~Lee, and M.~Mazeika.
\newblock Using pre-training can improve model robustness and uncertainty.
\newblock In {\em ICML}, 2019.

\bibitem{huang2013learning}
P.-S. Huang, X.~He, J.~Gao, L.~Deng, A.~Acero, and L.~Heck.
\newblock Learning deep structured semantic models for web search using
  clickthrough data.
\newblock In {\em Proceedings of the 22nd ACM international conference on
  Information \& Knowledge Management}, pages 2333--2338. ACM, 2013.

\bibitem{khim2018adversarial}
J.~Khim and P.-L. Loh.
\newblock Adversarial risk bounds for binary classification via function
  transformation.
\newblock {\em arXiv preprint arXiv:1810.09519}, 2018.

\bibitem{kingma2014adam}
D.~P. Kingma and J.~Ba.
\newblock Adam: A method for stochastic optimization.
\newblock {\em arXiv preprint arXiv:1412.6980}, 2014.

\bibitem{cifar10}
A.~Krizhevsky and G.~Hinton.
\newblock Learning multiple layers of features from tiny images.
\newblock Technical report, 2009.

\bibitem{kurakin17advScale}
A.~Kurakin, I.~Goodfellow, and S.~Bengio.
\newblock {Adversarial Machine Learning at Scale}.
\newblock In {\em ICLR}, 2017.

\bibitem{laine17tessl}
S.~Laine and T.~Aila.
\newblock Temporal ensembling for semi-supervised learnings.
\newblock In {\em ICLR}, 2017.

\bibitem{laurent2000adaptive}
B.~Laurent and P.~Massart.
\newblock Adaptive estimation of a quadratic functional by model selection.
\newblock {\em Annals of Statistics}, pages 1302--1338, 2000.

\bibitem{liu2016delving}
Y.~Liu, X.~Chen, C.~Liu, and D.~Song.
\newblock Delving into transferable adversarial examples and black-box attacks.
\newblock {\em arXiv preprint arXiv:1611.02770}, 2016.

\bibitem{madry18iclr}
A.~Madry, A.~Makelov, L.~Schmidt, D.~Tsipras, and A.~Vladu.
\newblock {Towards Deep Learning Models Resistant to Adversarial Attacks}.
\newblock In {\em ICLR}, 2018.

\bibitem{miller1995wordnet}
G.~A. Miller.
\newblock Wordnet: a lexical database for english.
\newblock {\em Communications of the ACM}, 38(11):39--41, 1995.

\bibitem{Miyato17VAT}
T.~Miyato, S.~ichi Maeda, M.~Koyama, and S.~Ishii.
\newblock {Virtual Adversarial Training: A Regularization Method for Supervised
  and Semi-Supervised Learning}.
\newblock {\em TPAMI}, 2018.

\bibitem{Najafi_robustness}
A.~{Najafi}, S.-i. {Maeda}, M.~{Koyama}, and T.~{Miyato}.
\newblock {Robustness to Adversarial Perturbations in Learning from Incomplete
  Data}.
\newblock {\em arXiv preprint arXiv:1905.13021}, May 2019.

\bibitem{oliva2001modeling}
A.~Oliva and A.~Torralba.
\newblock Modeling the shape of the scene: A holistic representation of the
  spatial envelope.
\newblock {\em IJCV}, 42(3):145--175, 2001.

\bibitem{Oliver18Realistic}
A.~Oliver, A.~Odena, C.~Raffel, E.~D. Cubuk, and I.~J. Goodfellow.
\newblock {Realistic Evaluation of Deep Semi-Supervised Learning Algorithms}.
\newblock In {\em NeurIPS}, 2018.

\bibitem{patrini17robusNoise}
G.~Patrini, A.~Rozza, A.~Menon, R.~Nock, and L.~Qu.
\newblock Making deep neural networks robust to label noise: a loss correction
  approach.
\newblock In {\em CVPR}, 2017.

\bibitem{rebuffi19ssl}
S.-A. Rebuffi, S.~Ehrhardt, K.~Han, A.~Vedaldi, and A.~Zisserman.
\newblock Semi supervised learning with scarce annotations.
\newblock In {\em ICML}, 2019.

\bibitem{rolnick17LabelNoise}
D.~Rolnick, A.~Veit, S.~J. Belongie, and N.~Shavit.
\newblock Deep learning is robust to massive label noise.
\newblock {\em CoRR}, abs/1705.10694, 2017.

\bibitem{Sajjadi16}
M.~Sajjadi, M.~Javanmardi, and T.~Tasdizen.
\newblock {Regularization with stochastic transformations and perturbations for
  deep semi-supervised learning}.
\newblock In {\em NeurIPS}, 2016.

\bibitem{samangouei2018defense}
P.~Samangouei, M.~Kabkab, and R.~Chellappa.
\newblock Defense-gan: Protecting classifiers against adversarial attacks using
  generative models.
\newblock {\em arXiv preprint arXiv:1805.06605}, 2018.

\bibitem{Schmidt18moredata}
L.~Schmidt, S.~Santurkar, D.~Tsipras, K.~Talwar, and A.~Madry.
\newblock {Adversarially Robust Generalization Requires More Data}.
\newblock In {\em NeurIPS}, 2018.

\bibitem{song2017pixeldefend}
Y.~Song, T.~Kim, S.~Nowozin, S.~Ermon, and N.~Kushman.
\newblock Pixeldefend: Leveraging generative models to understand and defend
  against adversarial examples.
\newblock {\em arXiv preprint arXiv:1710.10766}, 2017.

\bibitem{Szegedy14Intrig}
C.~Szegedy, W.~Zaremba, I.~Sutskever, J.~Bruna, D.~Erhan, I.~Goodfellow, and
  R.~Fergus.
\newblock Intriguing properties of neural networks.
\newblock In {\em ICLR}, 2014.

\bibitem{80m}
A.~Torralba, R.~Fergus, and W.~T. Freeman.
\newblock 80 million tiny images: a large dataset for non-parametric object and
  scene recognition.
\newblock {\em TPAMI}, 2008.

\bibitem{tramer2017ensemble}
F.~Tram{\`e}r, A.~Kurakin, N.~Papernot, I.~Goodfellow, D.~Boneh, and
  P.~McDaniel.
\newblock Ensemble adversarial training: Attacks and defenses.
\newblock {\em arXiv preprint arXiv:1705.07204}, 2017.

\bibitem{uesato2018adversarial}
J.~Uesato, B.~O'Donoghue, A.~v.~d. Oord, and P.~Kohli.
\newblock Adversarial risk and the dangers of evaluating against weak attacks.
\newblock In {\em ICML}, 2018.

\bibitem{ulyanov2016instance}
D.~Ulyanov, A.~Vedaldi, and V.~Lempitsky.
\newblock Instance normalization: The missing ingredient for fast stylization.
\newblock {\em arXiv preprint arXiv:1607.08022}, 2016.

\bibitem{wong18scaling}
E.~Wong, F.~R. Schmidt, J.~H. Metzen, and J.~Z. Kolter.
\newblock Scaling provable adversarial defenses.
\newblock In {\em NeurIPS}, 2018.

\bibitem{wu2018group}
Y.~Wu and K.~He.
\newblock Group normalization.
\newblock In {\em Proceedings of the European Conference on Computer Vision
  (ECCV)}, pages 3--19, 2018.

\bibitem{Xie19UAD}
Q.~Xie, Z.~Dai, E.~Hovy, M.-T. Luong, and Q.~V. Le.
\newblock {Unsupervised Data Augmentation}.
\newblock {\em arXiv:1904.12848}, 2019.

\bibitem{yin2018rademacher}
D.~Yin, K.~Ramchandran, and P.~Bartlett.
\newblock Rademacher complexity for adversarially robust generalization.
\newblock In {\em ICML}, 2018.

\bibitem{zagoruyko2016wide}
S.~Zagoruyko and N.~Komodakis.
\newblock Wide residual networks.
\newblock {\em arXiv preprint arXiv:1605.07146}, 2016.

\bibitem{Zhai_robustness}
R.~{Zhai}, T.~{Cai}, D.~{He}, C.~{Dan}, K.~{He}, J.~{Hopcroft}, and L.~{Wang}.
\newblock {Adversarially Robust Generalization Just Requires More Unlabeled
  Data}.
\newblock {\em arXiv preprint arXiv:1906.00555}, Jun 2019.

\bibitem{Zhang19Trades}
H.~Zhang, Y.~Yu, J.~Jiao, E.~P. Xing, L.~E. Ghaoui, and M.~I. Jordan.
\newblock {Theoretically Principled Trade-off between Robustness and Accuracy}.
\newblock {\em arXiv:1901.08573}, 2019.

\bibitem{zheng16stabTraining}
S.~Zheng, Y.~Song, T.~Leung, and I.~Goodfellow.
\newblock {Improving the Robustness of Deep Neural Networks via Stability
  Training}.
\newblock In {\em CVPR}, 2016.

\end{thebibliography}
}

\clearpage
\appendix
\section*{Overview}

The appendices are organised as follows. 
Appendix~\ref{sec:experimental_details} provides additional details on the experiments in Section~\ref{sec:experiments}.
Appendix~\ref{app:vat} details our implementation of VAT~\cite{Miyato17VAT} adapted for $L_\infty$ adversarial robustness.
Appendix~\ref{app:dataset_details} provides additional details on the 80m@N dataset generation procedure.
Appendix~\ref{app:code_release} details code release.
Appendix~\ref{app:adv_eval_details} includes additional experiments for the adversarial evaluation of our trained models, as well as checks against gradient masking.
Finally, we include the proof of Theorem~\ref{thm:uat_ft_gaussian} in Appendix~\ref{sec:proof_gaussian_model}.

\section{Experimental Details}
\label{sec:experimental_details}

\subsection{Implementation notes}
\label{sec:implementation_notes}
\textbf{Model architecture.}
For all experiments, we use variants of wide residual networks (WRNs)~\cite{zagoruyko2016wide}.
In Section~\ref{subsec:exp_control}, we use a WRN of width 2 and depth 28 for SVHN and a WRN of width 8 and depth 28 for \cifar{}.
We explore increasing the depth of the network (while keeping width to be 8) to 34, 70 and 106 in Section~\ref{subsec:exp_80m}.

\textbf{Data preprocessing.}
We use standard data augmentation techniques for images.
For \cifar{}, 4-pixel padding is used before performing random crops of size 32x32 and random left-right flip.
For SVHN, 4-pixel padding is also employed before random crops of size 32x32 followed by random color distortions.

\textbf{Pseudocode.} 
We provide pseudocode for our particular implementations of each UAT variant. 
To simplify notation, when writing $(x, y) \sim \unsupdata$, the target $y$ is always the fixed target pseudo-label (which is unused in \uatot).
Recall that these pseudo labels are obtained from a model trained on $\supdata$ alone.
\empadvloss and \empuatloss are the empirical estimates of the robust loss from \cite{madry18iclr} (as in \uatft) and $\mathcal{L}^{\text{OT}}$ respectively, as defined in the next section.

\begin{algorithm}
\caption{\uatot{} update}
\label{alg:uatot}
\begin{algorithmic}
\STATE {\bfseries Input:} Weight hyperparameter $\lambda$, batch sizes $b_s$ and $b_u$
\STATE Sample $b_s$ labeled examples $(\bmx_s, \bmy_s) \sim \supdata$ and $b_u$ unlabeled examples $(\bmx_u, \bmy_u) \sim \unsupdata$
\STATE Compute loss $L = \empadvloss(\bmx_s, \bmy_s; \theta) + \lambda (\frac{b_s}{b_u}) \empuatloss(\bmx_u, \bmy_u; \theta)$
\STATE Update with gradient $g = \nabla_\theta L$

\end{algorithmic}
\end{algorithm}

\begin{algorithm}
\caption{\uatft{} update}
\label{alg:uatft}
\begin{algorithmic}
\STATE {\bfseries Input:} Batch sizes $b_s$ and $b_u$
\STATE Sample $b_s$ labeled examples $(\bmx_s, \bmy_s) \sim \supdata$ and $b_u$ unlabeled examples $(\bmx_u, \bmy_u) \sim \unsupdata$
\STATE Merge $\bmx = [\bmx_s; \bmx_u]$; $\bmy = [\bmy_s; \bmy_u]$
\STATE Compute loss $L = \empadvloss(\bmx, \bmy; \theta)$
\STATE Update with gradient $g = \nabla_\theta L$
\end{algorithmic}
\end{algorithm}

\begin{algorithm}
\caption{\uatpp{} update}
\label{alg:uatpp}
\begin{algorithmic}
\STATE {\bfseries Input:} Weight hyperparameter $\lambda$, batch size $b_s$ and $b_u$
\STATE Sample $b_s$ labeled examples $(\bmx_s, \bmy_s) \sim \supdata$ and $b_u$ unlabeled examples $(\bmx_u, \bmy_u) \sim \unsupdata$
\STATE Merge $\bmx = [\bmx_s; \bmx_u]$; $\bmy = [\bmy_s; \bmy_u]$
\STATE Compute loss $L = \empadvloss(\bmx, \bmy; \theta) + \lambda \empuatloss(\bmx, \bmy; \theta)$
\STATE Update with gradient $g = \nabla_\theta L$
\end{algorithmic}
\end{algorithm}

\textbf{Loss implementations.} 
In the above, the empirical estimates of the losses are defined as follows:
\begin{align*}
\hat{\mathcal{L}}^{adv}(\bmx, \bmy; \theta) &= 
\frac{1}{|(\bmx,\bmy)|}	\sum_{i=1}^{|(\bmx,\bmy)|} \sup_{x'_i \in \normball(\bmx_i)} \xent(\bmy_i, p_\theta(\cdot | x'_i)) \\
\empuatloss(\bmx, \bmy; \theta) &=
	\frac{1}{|(\bmx,\bmy)|} \sum_{i=1}^{|(\bmx,\bmy)|} \sup_{x'_i \in \normball(\bmx_i)} \mathcal{D}(p_{\hat{\theta}}(.|\bmx_i),p_\theta(.|x'_i))
\end{align*}

We always approximate each maximization with 10-steps of PGD, as described in Appendix \ref{app:pgd_details}.
For $\empuatloss$, there are two implementational details of the attack which are not obvious from the pseudocode.
\begin{itemize}
\item When computing $\empuatloss$ in \uatot, we solve the adversarial optimization with a ``hard-label'' rather than ``soft-label'' attack.
That is, rather than maximizing the KL directly, we take the hard label $\hat{y} = \argmax_y p_\theta(y|x)$ and run a PGD attack using $\hat{y}$ as the label.
We suspect this is because when $p_\theta(y|x)$ is not completely one-hot, we are maximizing a convex objective.
There are (at least) two issues due to this. 
First, the gradient of the KL, evaluated at $x$, is 0, since $p(y|x)$ is the global minimum. 
Second, if the random initialization $x'$ causes $p(\hat{y}|x') > p(\hat{y}|x)$, then the gradient will encourage increasing $p(\hat{y}|x')$, rather than decreasing it. 
While previous work in \cite{Zhang19Trades} finds that initializing to a random perturbation of $x$ allows PGD to effectively maximize the KL, using hard labels worked better in our experiments.
\item When computing $\empuatloss$ in \uatpp, we reuse the adversarial example computed to maximize $\hat{\mathcal{L}}^{adv}$, for computational efficiency.
\end{itemize}
In both cases, the model loss is still a KL divergence.

\noindent
\textbf{Loss weights.}
For all experiments, we used $\lambda=5$ for both \uatot{} and \uatpp.
We fixed these values based on early experiments.

\noindent
\textbf{Differences with distribution shift.}
For our off-distribution experiment in Section \ref{subsec:exp_80m}, we make two changes to accomodate distribution shift.
First, we compute the loss on the labeled and unlabeled examples using separate forward passes through the network.
Without batch norm, this has no effect on the computation.
With batch norm, we observe this helps slightly, perhaps because the off-distribution unlabeled data degrades the local batch statistics.
Second, we downweight the loss of unlabeled examples by a factor of $b_s / b_u$ (this corresponds to averaging the loss across examples within each separate batch, rather than summing).
We suspect that while label noise has little effect, distribution shift degrades performance, and so focusing more on in-distribution examples is helpful.

\noindent
\textbf{Batch sizes.}
Throughout Section \ref{subsec:exp_control}, we use batch sizes proportional to the dataset size. In other words, given labeled and unlabeled datasets $\supdata$ and $\unsupdata$, for a total batch size of $B=128$, we use $b_s = BN / (N + M)$ and $b_u = BM / (N + M)$. 
This ensures we perform an equal number of passes through each dataset, and helps avoid overfitting the (small) labeled dataset.
For our high data regime experiment in Section \ref{subsec:exp_80m}, we use fixed $b_s=512$ and $b_u=4096$.

\noindent
\textbf{Optimization.}
We use the SGD with momentum optimizer for all training jobs.
For all training jobs, we use weight decay ($L_2$ regularization) weighted by $5\times 10^{-4}$.

In Section~\ref{sec:control_exp_analysis}, we use a total batch size of 128, for $25K$ steps with an initial learning rate of $0.2$ which is decreased by a factor 10 at iterations $15K$, $18K$ and $20K$.
This schedule was fine-tuned using the validation set.

In Section~\ref{sec:real_world_dist_shift}, we use the same learning rate schedule of~\cite{Zhang19Trades}'s public repository is  used for \cifar{}, \ie initiate with 0.2, then dividing by 10 after 15K and 22K steps.

\subsection{Negative results and observations}
We note several observations we made while running experiments.
Note that these are much less carefully examined than the results reported in the main paper.
Indeed, we suspect that some of these observations may be specific to our specific training setup.
However, we include these as we believe they may nonetheless be helpful for future researchers:
\begin{itemize}
\item We merge (pseudo)-labeled and unlabeled batches when possible, since this improved robust accuracy, particularly in the small-data regime (when using smaller batch sizes)
We suspect this is due to more reliable local batch statistics in batch normalization, since with small batch sizes, the labeled batch can be quite small.
\item We found learning rate schedules to be particularly important. For example, in Table \ref{tab:uat_largeScale}, the primary difference between our reimplementation of adversarial training, and the implementation in \cite{madry18iclr} is we use the significantly shorter schedule from \cite{Zhang19Trades}, which improves robust accuracy by \texttildelow3\%. 
\item With our current choices for $\lambda$, we notice a strong regularizing effect from $\empuatloss$. While \uatft{} reaches nearly 100\% train robust accuracy by the end of training, the same schedule used with \uatot{} or \uatpp{} stays below 80\% train robust accuracy.
\item When using \empuatloss on unsupervised data, we noticed it was necessary to either use fixed parameters $\hat{\theta}$ for computing the target prediction, or to also use fixed targets on the unlabeled data.
When both these are missing, the model would learn to predict a uniform class distribution for images in the unsupervised dataset.
While such models achieve high training accuracy, test set accuracy stays close to random, even on unperturbed images.
On labeled data, both versions with and without backpropagating into target predictions work fine, as reported in \cite{Zhang19Trades}.
\item We tried combining UAT with the recent semi-supervised UDA method~\cite{Xie19UAD} but did not see significant improvements in adversarial accuracy.
\item We experimented with alternative normalization strategies such as InstanceNorm \cite{ulyanov2016instance} and GroupNorm \cite{wu2018group}, but observed slightly worse results.
\end{itemize} 

\subsection{Projected gradient descent (PGD) details}
\label{app:pgd_details}

We provide additional details on our untargeted PGD attack, which we use for adversarial training.
The untargeted attack optimizes the margin loss objective proposed in \cite{carlini2017towards}
$$J_{\theta}^\mathrm{adv}(x) = Z(x, \theta)_y - \max_{i \neq y} Z(x, \theta)_i \: ,$$
where $Z(x, \theta)_i$ denotes the logit for class $i$, predicted by model $\theta$ on input $x$, and $y$ denotes the true label.
The margin loss is negative if and only if $x$ is misclassified.

We optimize this objective with projected gradient descent \cite{kurakin17advScale, madry18iclr} using the Adam optimizer \cite{kingma2014adam}.
Consistent with prior work (e.g. \cite{carlini2017towards, liu2016delving}), we find these modifications to improve attack convergence speed when compared to using the negative cross-entropy loss or vanilla gradient updates.
During training, we perform 10 steps of optimization, as in \cite{Zhang19Trades}.

\section{Implementation note on VAT baseline implementation}
\label{app:vat}

Recall that for UAT-OT, we use the loss introduced in~\cite{Miyato17VAT} and also used in \cite{Zhang19Trades}
\begin{equation}
\label{eq:app_uat_ot}
\lossot(\theta) = \mathev_{x\sim\unsupdist} \sup_{x'\in\mathcal{N}_\epsilon(x)}\mathcal{D}(p_{\hat{\theta}}(.|x),p_\theta(.|x')),
\end{equation}
where $\mathcal{D}$ is the Kullback-Leibler divergence, 
and $\hat{\theta}$ indicates a fixed copy of the parameters $\theta$ in order to stop the gradients from propagating.

In practice, computing the optimal perturbation $x'$ (in $\sup_{x'\in\mathcal{N}_\epsilon(x)}$) cannot be achieved in closed form, hence approximations have to be proposed.
In~\cite{Miyato17VAT}, the authors start from a second order Taylor approximation of $\mathcal{D}(p_{\hat{\theta}}(.|x),p_\theta(.|x'))$, then use a combination of finite differences to efficiently approximate the Hessian and power iteration method to find an estimate of the optimal perturbation.
Their end goal being \textit{standard generalization}, they observe that only one iteration of the power method is sufficient for good performance.
This specific procedure (Hessian approximation and eigenvector estimation though power method) is specifically tailored to the case where the chosen specification for the adversary corresponds to the $L_2$ ball: $\normball(x) = \{x' : \norm{x' - x}_2 \leq \epsilon \}$,  whereas we focus on the $L_\infty$ ball type of constraints.

For fair comparison, we adapt their algorithm to the $L_\infty$ ball by replacing the previously mentioned procedure by one step of FGSM with learning rate equal to $\epsilon$.

UAT-OT mainly differs from VAT in the fact that we instead use 10 steps of PGD to estimate $x'$.
Looking at Figure~\ref{fig:control_exp} in the main paper, UAT-OT clearly outperforms VAT, indicating that when it comes to adversarial generalization, one has to use a stronger adversary to generate the perturbation.

\section{Dataset Details}
\label{app:dataset_details}

We detail the procedure used to generate the \bigunsupn{} datasets, used for experiments in Section \ref{sec:real_world_dist_shift}.
To remove exact and near duplicate images, we follow \cite{douze2009evaluation} and use the GIST image descriptors \cite{oliva2001modeling} provided with the \tinydsfullname{} dataset \cite{80m}.
For every image, we compute the $L_2$ distance to its nearest neighbor in the \cifar{} test set, as measured in GIST feature space, and remove all images within distance $0.28$, totalling roughly one million images.
Following this, we manually checked for near duplicates, by random sampling and visualizing $L_2$ nearest neighbors.
In our initial version of this paper, we removed only exact duplicates, which produced robust accuracies roughly 0.5 - 2\% higher across all experiments.

For filtering, we use a WRN-28-10 \cite{zagoruyko2016wide} model trained on \cifar, which achieves 96.0\% accuracy on the \cifar{} test set.
Following the procedure used for the original \cifar{} dataset \cite{cifar10}, for each class in \cifar, we use hyponyms of the class name, based on the Wordnet hierarchy \cite{miller1995wordnet}.
This leaves roughly 2 million images remaining, though the class distribution is highly non-uniform (of these, 1 million correspond to the ``dog'' class, while just 50000 correspond to ``deer'' or ``frog'').
For each image, we record the probability assigned by the pretrained model to the associated class.
We restrictto images with associated probability over $0.5$, and take the top $N/10$ images per class.
For the \bigunsup{500} dataset, some classes contain less than $50$K such examples.
In this case, we randomly duplicate examples, to maintain class balance.
Finally, we note that although we use a simple procedure here, and did not experiment with alternatives, we believe developing better procedures to exploit uncurated data is an important and underexplored research direction.

\section{Code Release}
\label{app:code_release}

Example usage of our best performing model (WRN-106 trained with \uatpp~on 80m@200K) as well as all the 80m@N datasets we used to train our models can be found on our github repository.\footnote{\url{https://github.com/deepmind/deepmind-research/tree/master/unsupervised_adversarial_training}}

\section{Adversarial Evaluation Details}
\label{app:adv_eval_details}

\subsection{Multitargeted attack evaluation}
\label{sec:multitargeted_attack}

We evaluate using the \multitargeted attack proposed in~\cite{gowal2019alternative}, which we have found to be significantly stronger than commonly used PGD attacks, at the cost of increased computation.
The \multitargeted{} implementation is equivalent to the untargeted attack described above, but rather than performing a single optimization, instead runs a targeted attack against each possible class, and returns the image which minimizes the untargeted loss.
The targeted margin loss is
\[
J_{\theta}^\mathrm{adv}(x) = Z(x, \theta)_y - Z(x, \theta)_t \: ,
\]
where $t$ is the target class.
We use 200 steps with 20 random restarts for each class.

We have found the \multitargeted{} attack to reliably outperform untargeted PGD attacks, which in turn reliably outperform \fgsm. In ensuring the strongest results for untargeted PGD, we found that using 200 steps with 20 random restarts slightly outperforms 100 steps with 1000 random restarts, and hence report results using the former.
For example, for our strongest model, trained on the \bigunsup{200} dataset,
the \fgsm adversarial accuracy is 63.65\%, the untargeted PGD adversarial accuracy is 61.10\%,  and the \multitargeted{} adversarial accuracy is 56.30\%. 

\subsection{SPSA evaluation}
\label{app:spsa_eval}
As an additional check against gradient masking \cite{athalye2018obfuscated, uesato2018adversarial}, we run a gradient-free attack, SPSA \cite{uesato2018adversarial}, against our strongest model, \uatpp{} trained on the \ \bigunsup{200} unsupervised dataset.
For SPSA, we use a batch size of 8192 with 40 iterations.
We obtain 64.9\% adversarial accuracy, similar to the 61.1\% obtained by untargeted PGD.
We further observe in Figure \ref{fig:spsa_scatter} that SPSA and PGD reliably converge to similar loss values, and that SPSA rarely outperforms PGD.
This provides additional evidence that the model's strong performance is not due to gradient masking. %

\begin{figure}[ht]
	\begin{center}
		
		\begin{tikzpicture}
		\begin{axis}[
		scatter, axis x line=center, axis y line=center,
		width=0.47\textwidth, height=0.4\textwidth,
		xlabel={$J_{\theta}^\mathrm{adv}(x)$ using PGD},
		ylabel={$J_{\theta}^\mathrm{adv}(x)$ using SPSA},
		x label style={at={(axis description cs:0.5,-0.1)},anchor=north},
		y label style={at={(axis description cs:-0.1,.5)},rotate=90,anchor=south},
		domain=-15:10,
		]
		
		\addplot[only marks, mark size=1pt, opacity=0.7] table [x index=0, y index=1, scatter src=-\thisrowno{2}, col sep=comma] {comparison_scatter.csv};
		\addplot[black, domain=-14.5:10, mark=none, scatter src=explicit] coordinates {(-7, -7)[0] (13, 13)[0]};
		\end{axis}
		\end{tikzpicture}
		
		\caption{\textbf{Analysis of gradient masking with SPSA}: We compare the final values of the margin loss across different images, between PGD and SPSA. Each point represents a single image, which is misclassified when $J_\theta^\mathrm{adv}(x) < 0$. 
			Overall, we find that SPSA and PGD converge to similarly adversarial perturbations (points close to the line $y=x$).
			We observe relatively few images where SPSA outperformed PGD (below the line, shown in red). The upward bend in the dots to the left of $-3$ are an artifact due to the fact that we terminate the SPSA attack early, once we find any image with margin loss below $-3$.}
		\label{fig:spsa_scatter}
	\end{center}
\end{figure}
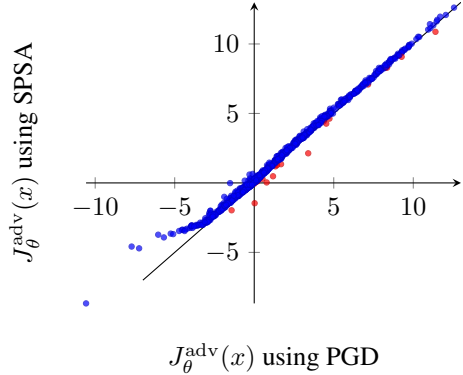

\subsection{Loss landscape analysis}
\label{sec:loss_landscape}
As another check against gradient masking, we look at the adversarial loss landscape from our strongest model.
We examine the loss surfaces for four images where the \texttt{MultiTargeted} attack succeeded, but untargeted PGD attack, with 200 steps and 20 restarts, did not. %
Figure~\ref{fig:loss_landscape} shows the untargeted adversarial loss (optimized by PGD) around the nominal image from \cifar.
In these loss landscapes, we vary the input along a linear space defined by the worse perturbations found by PGD and a random direction.
The $u$ and $v$ axes represent the magnitude of the perturbation added in each of these directions respectively and the $z$ axis represents the loss.
For figures on the right hand side of Figure~\ref{fig:loss_landscape}, they show a top-view of the loss landscape and indicates that a large portion of $L_\infty$~ball around the nominal image pushes the PGD solution towards the right (rather than the bottom).
We observe that the loss landscape is rather smooth, which provides (weak) additional evidence that the strong performance is not due to gradient masking.

\begin{figure}[h]
	\centering
	\begin{subfigure}{0.4\textwidth}
		\includegraphics[width=\linewidth, trim=2.2cm 0cm 0cm 1.3cm, clip]{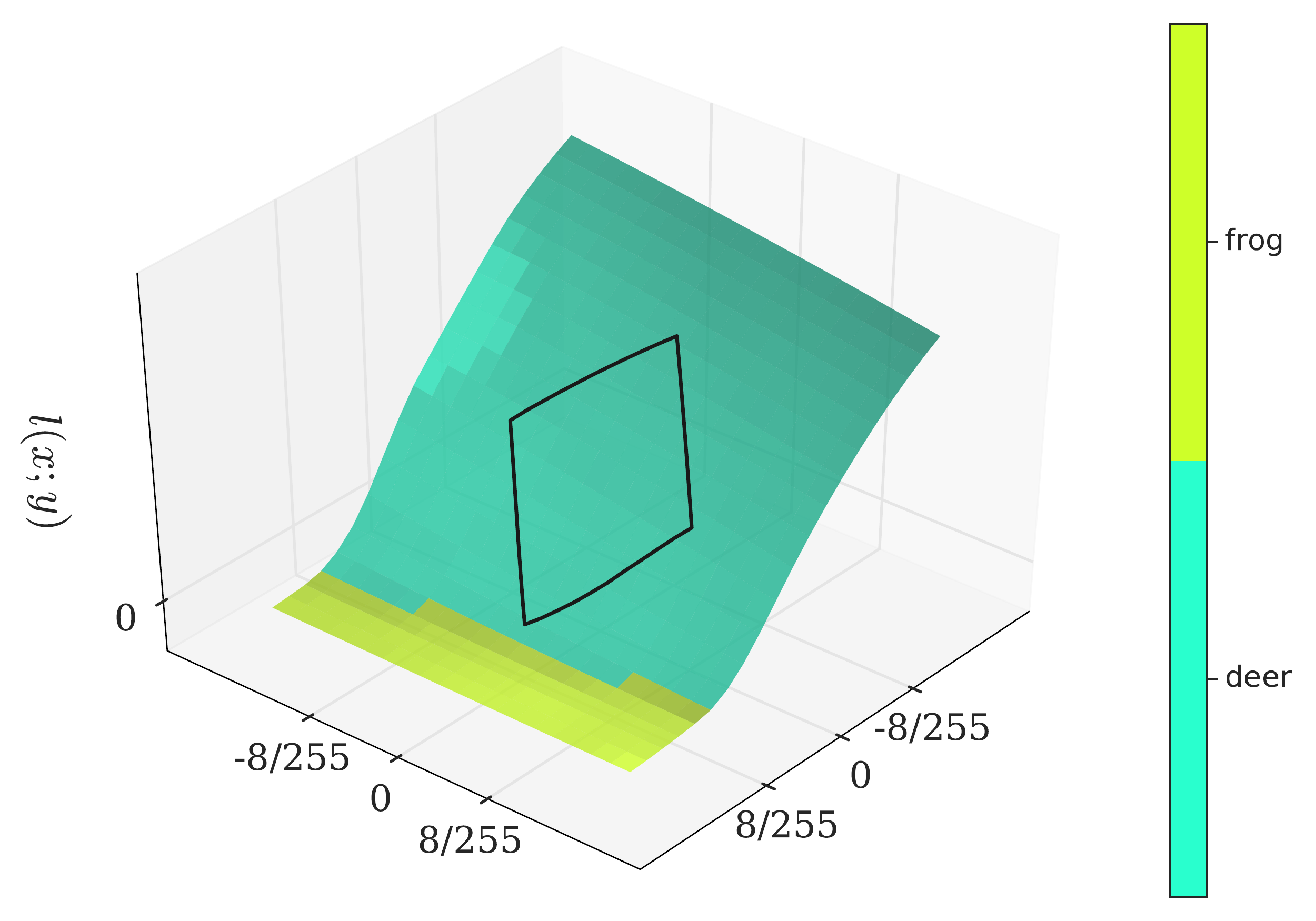}
		{\label{fig:loss_landscape_3d_9936}}
	\end{subfigure} \hspace{.5cm}
	\begin{subfigure}{0.42\textwidth}
		\includegraphics[width=\linewidth]{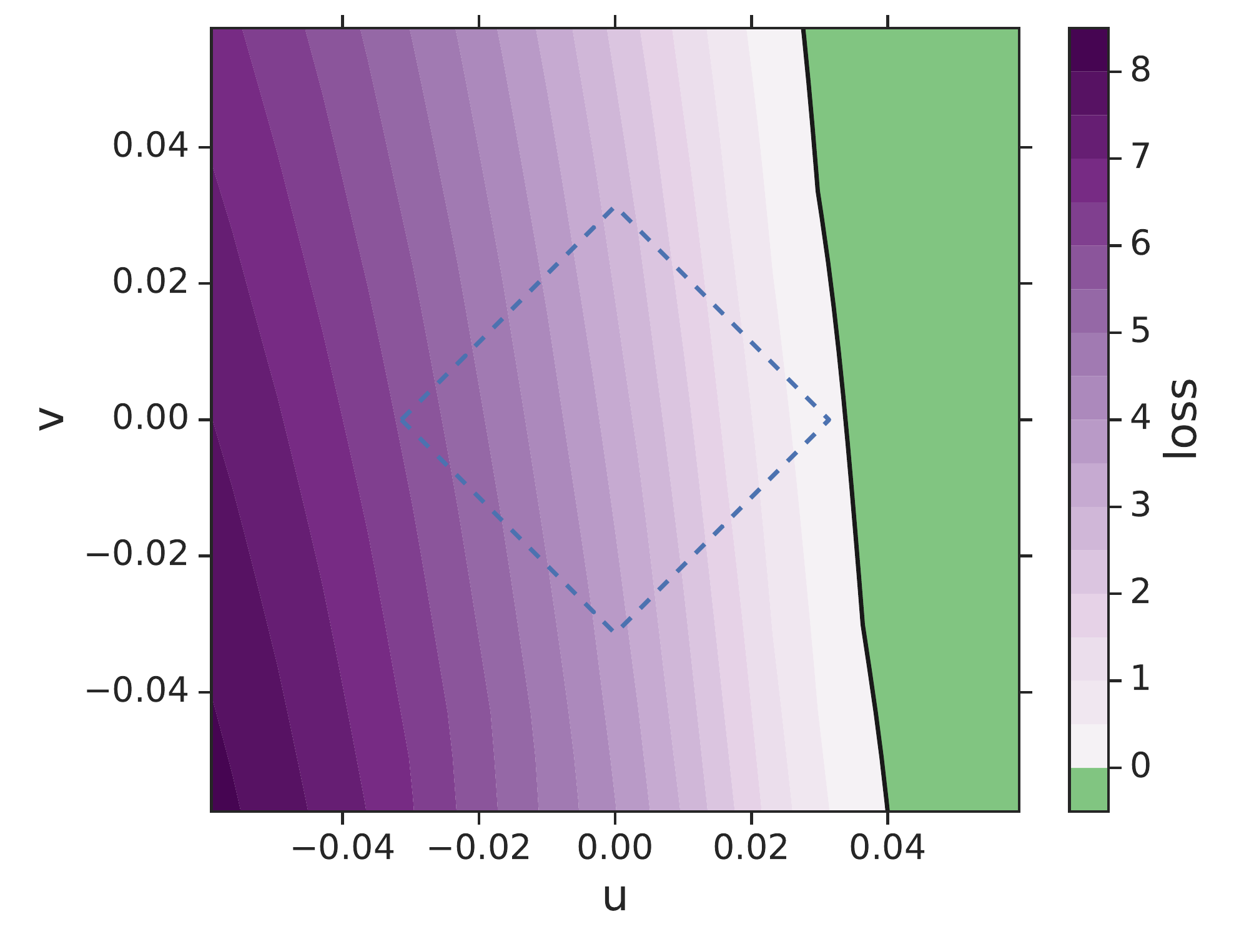}
		{\label{fig:loss_landscape_2d_9936}}
	\end{subfigure}
	\begin{subfigure}{0.4\textwidth}
		\includegraphics[width=\linewidth, trim=2.2cm 0cm 0cm 1.3cm, clip]{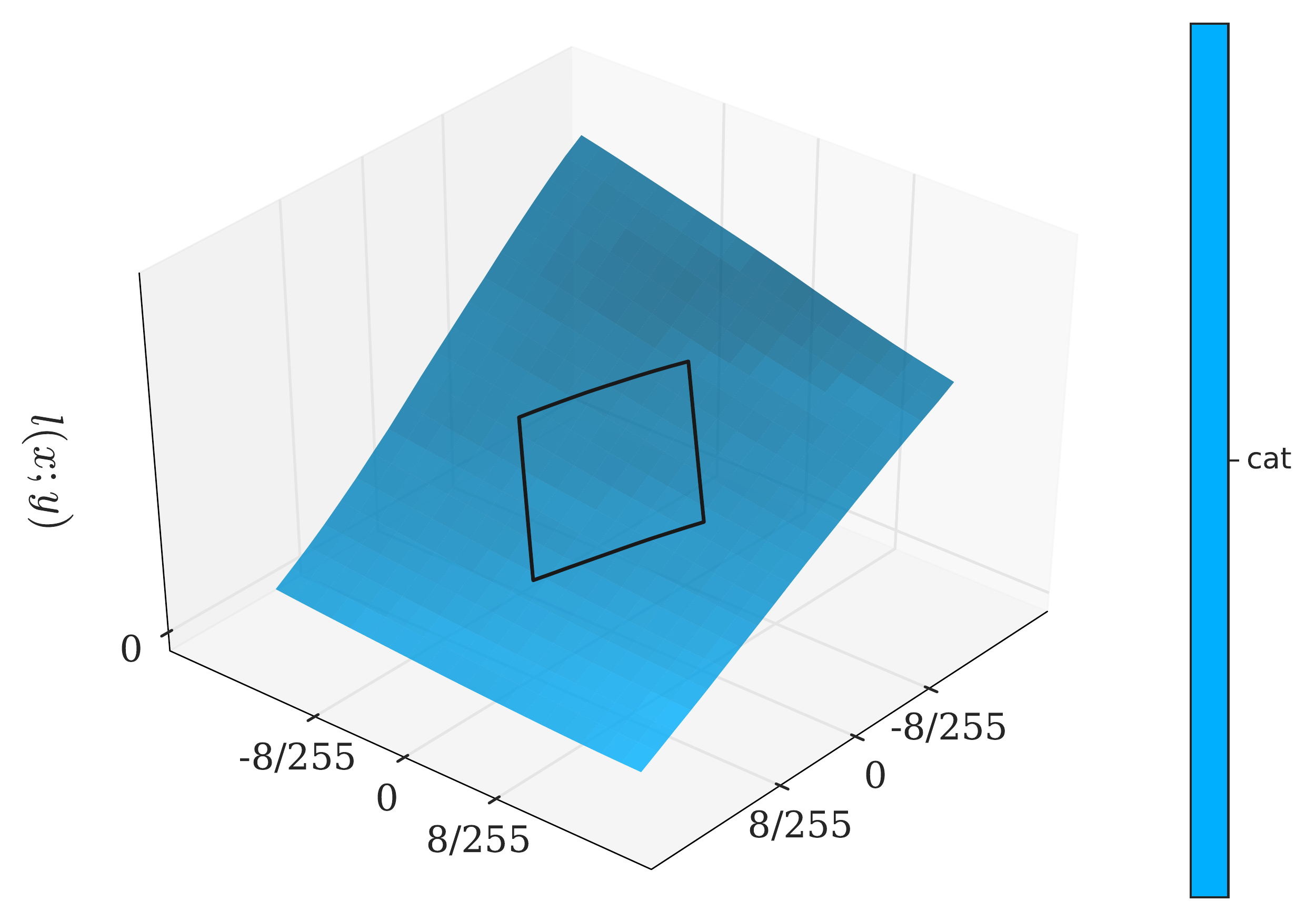}
		{\label{fig:loss_landscape_3d_9937}}
	\end{subfigure} \hspace{.5cm}
	\begin{subfigure}{0.42\textwidth}
		\includegraphics[width=\linewidth]{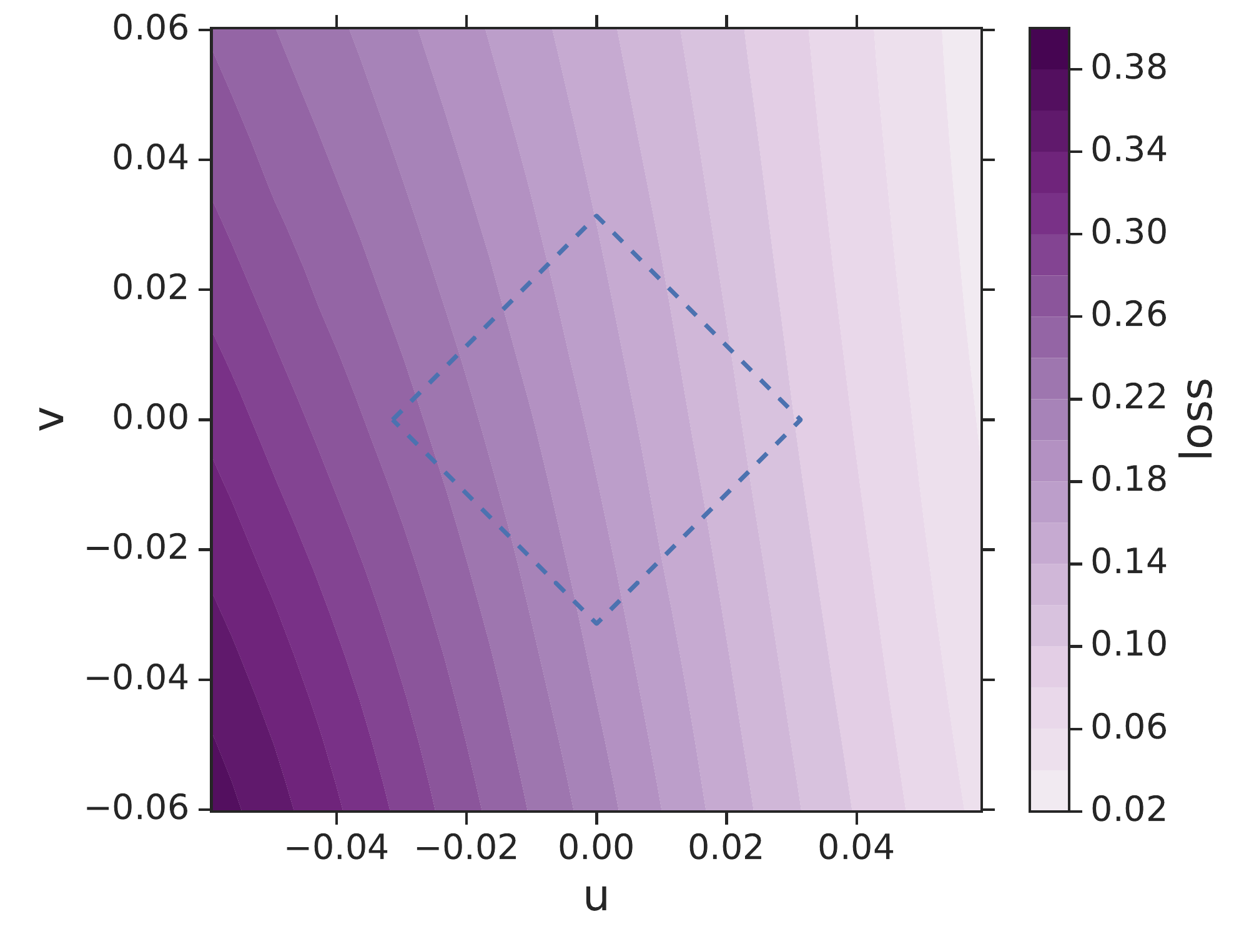}
		{\label{fig:loss_landscape_2d_9937}}
	\end{subfigure}\vspace{-5mm}
	\begin{subfigure}{0.4\textwidth}
		\includegraphics[width=\linewidth, trim=2.2cm 0cm 0cm 1.3cm, clip]{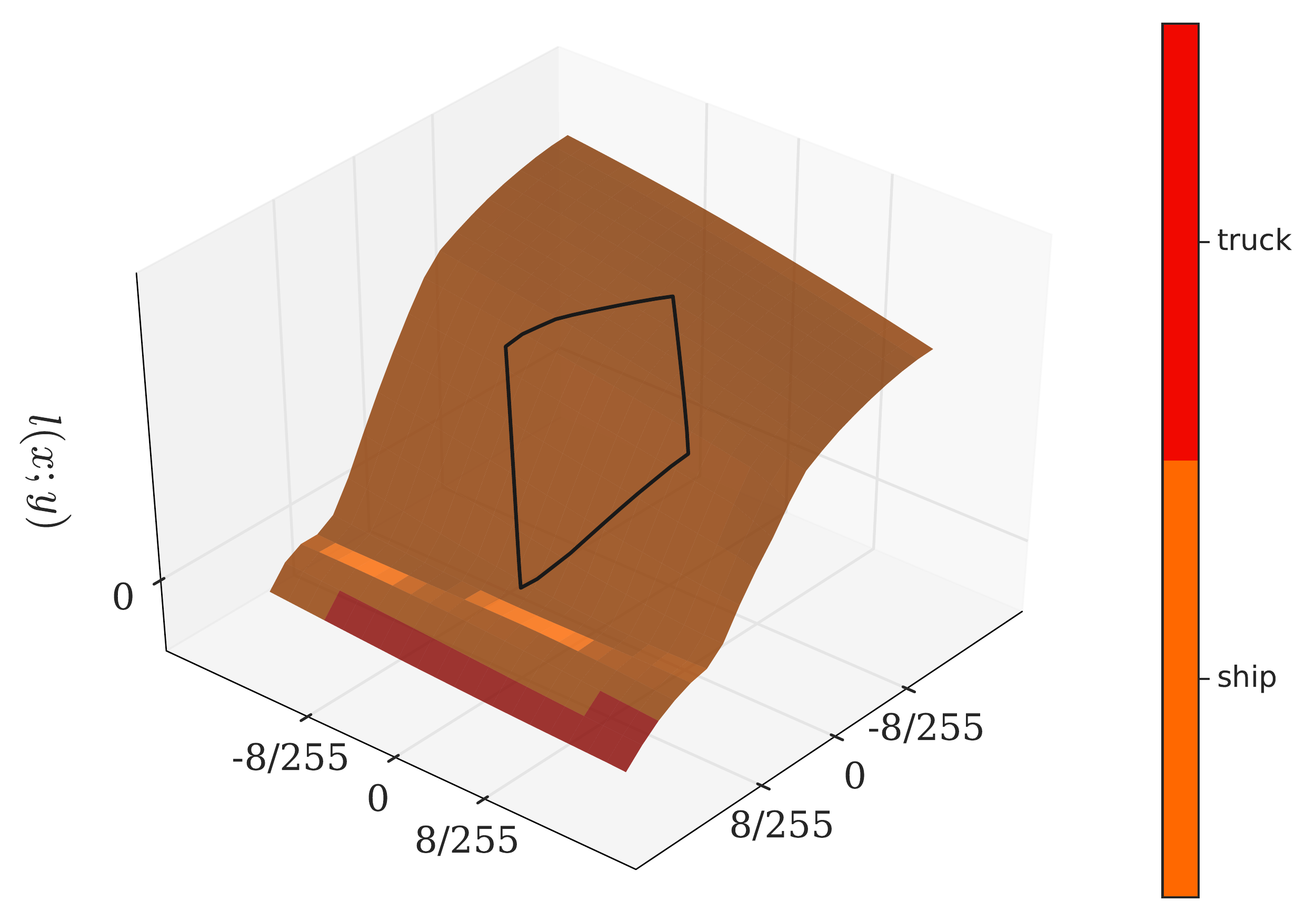}
		{\label{fig:loss_landscape_3d_9938}}
	\end{subfigure} \hspace{.5cm}
	\begin{subfigure}{0.42\textwidth}
		\includegraphics[width=\linewidth]{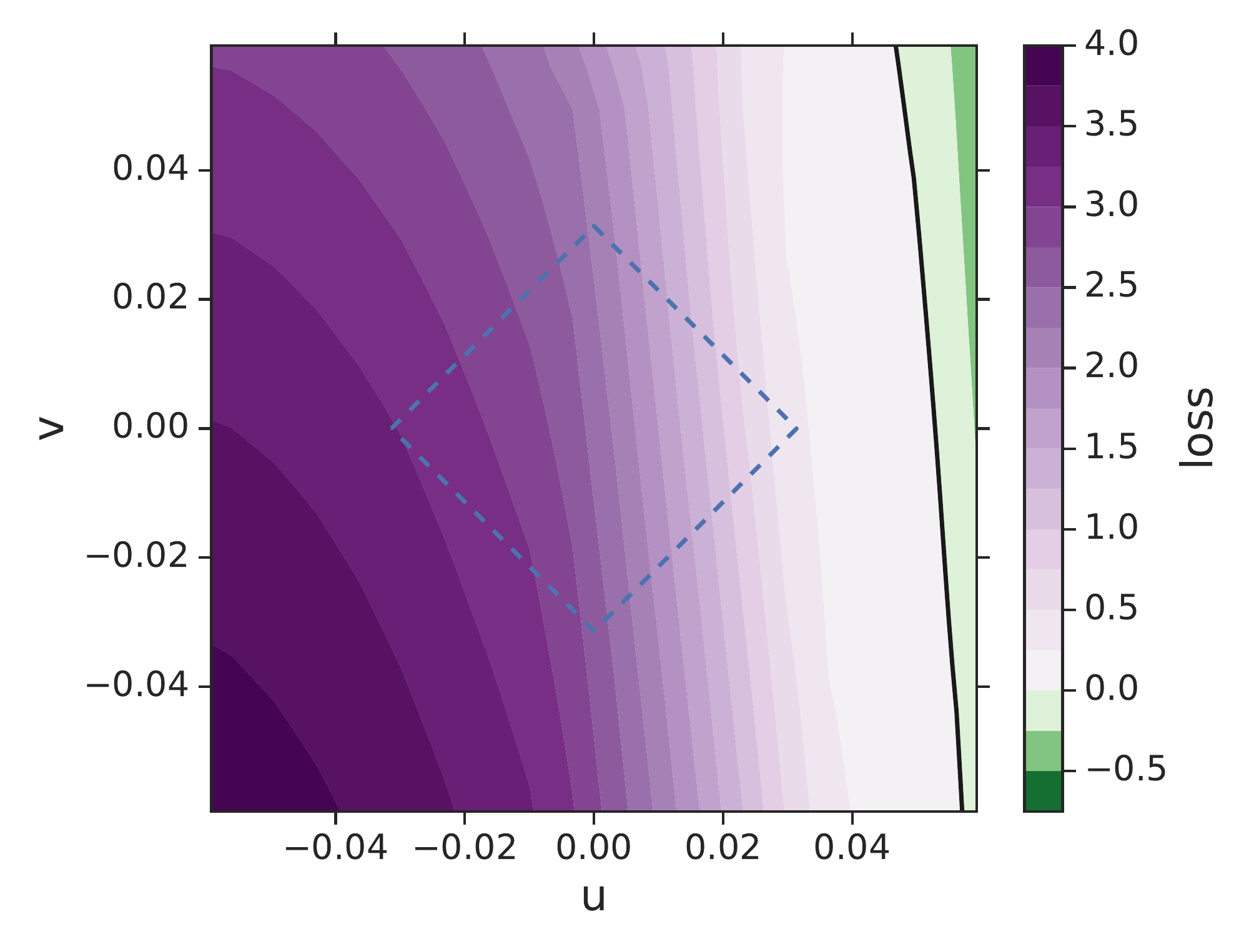}
		{\label{fig:loss_landscape_2d_9938}}
	\end{subfigure}\vspace{-5mm}
	\begin{subfigure}{0.4\textwidth}
		\includegraphics[width=\linewidth, trim=2.2cm 0cm 0cm 1.3cm, clip]{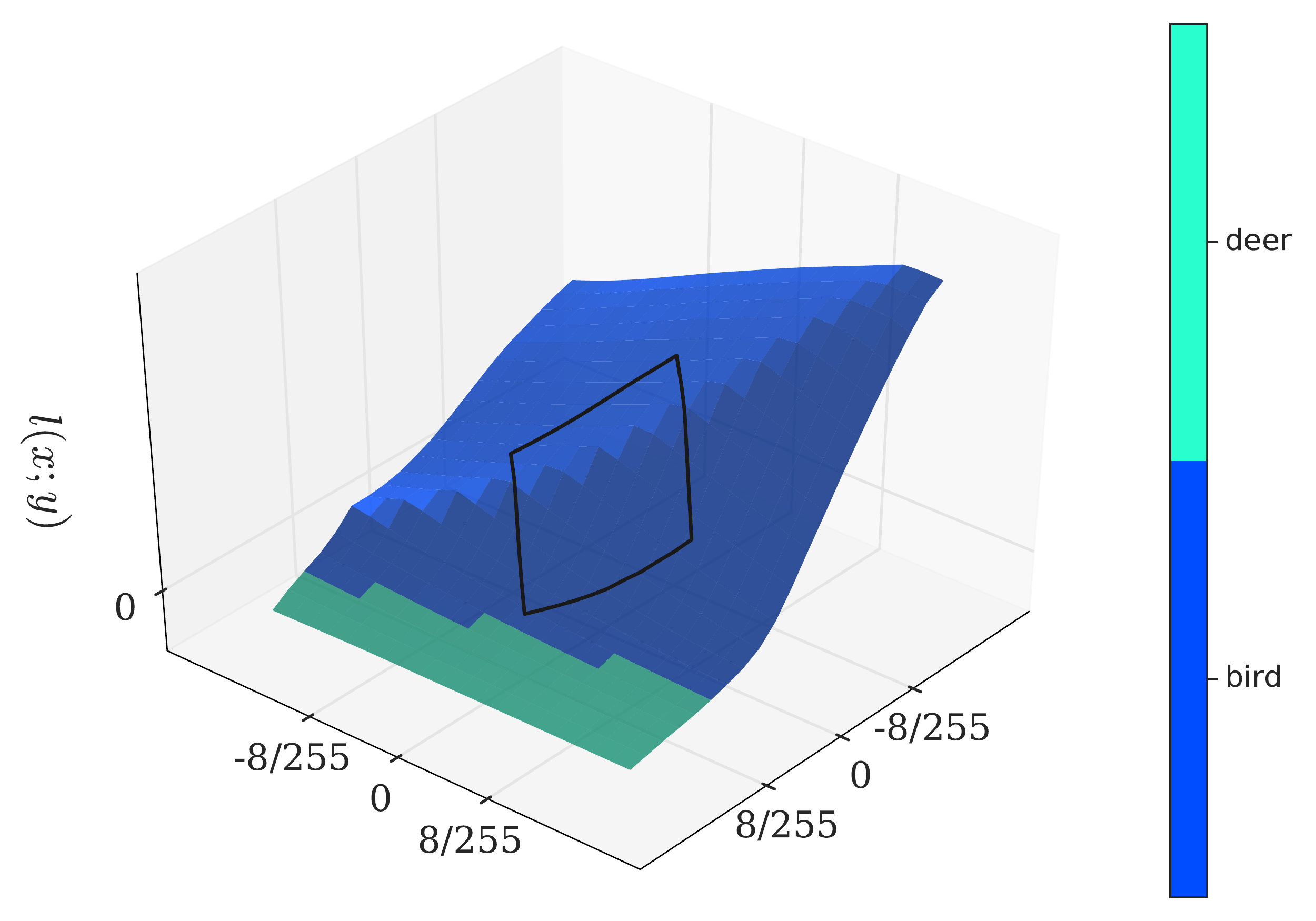}
		{\label{fig:loss_landscape_3d_9961}}
	\end{subfigure} \hspace{.5cm}
	\begin{subfigure}{0.42\textwidth}
		\includegraphics[width=\linewidth]{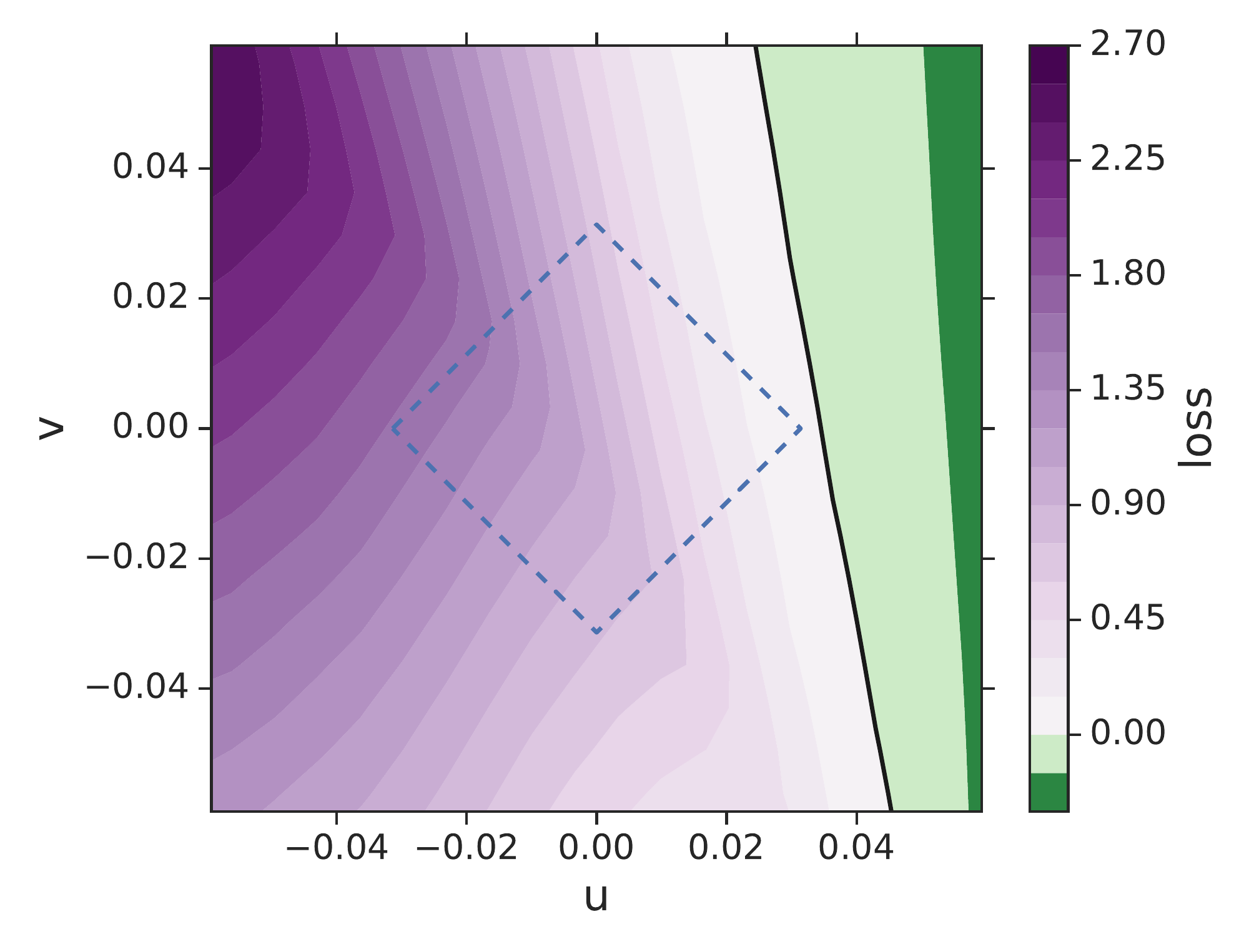}
		{\label{fig:loss_landscape_2d_9961}}
	\end{subfigure}\vspace{-5mm}

	\caption{Adversarial loss landscapes around the nominal images.
		It is generated by varying the input to the model, starting from the original input image toward either the worst attack found using PGD ($u$ direction) or the one found using a random direction ($v$ direction). For the figures on the left hand side, the z axis represents the loss. For both panels, the diamond-shape represents the projected $L_\infty$~ball of size $\epsilon = 8/255$ around the nominal image.}
	\label{fig:loss_landscape}
\end{figure}

\subsection{Attack convergence analysis}
\label{sec:attack_convergence}
As another check against gradient masking, we analyzed the convergence of PGD.
Figure \ref{fig:attack_convergence} shows convergence of untargeted PGD across different random restarts for our strongest model, though we also observe similar patterns across other models.
We observed that on randomly selected images, PGD quickly converges, and that the final loss values across random restarts are tightly clustered, indicating PGD likely converges to near-optimal perturbations.
Figure \ref{fig:attack_convergence_rand} shows randomly selected images, and is consistent with what we observe across images where \multitargeted{} and PGD agree.
In the fraction of cases where \multitargeted{} succeeded but PGD did not, we can find evidence of gradient masking on some images through random restarts.
In Figure \ref{fig:attack_convergence_mislabeled}, there are two images where the final loss varies across different random restarts.

\begin{figure}[h]
	\centering
	\begin{subfigure}{\textwidth}
		\includegraphics[width=\linewidth]{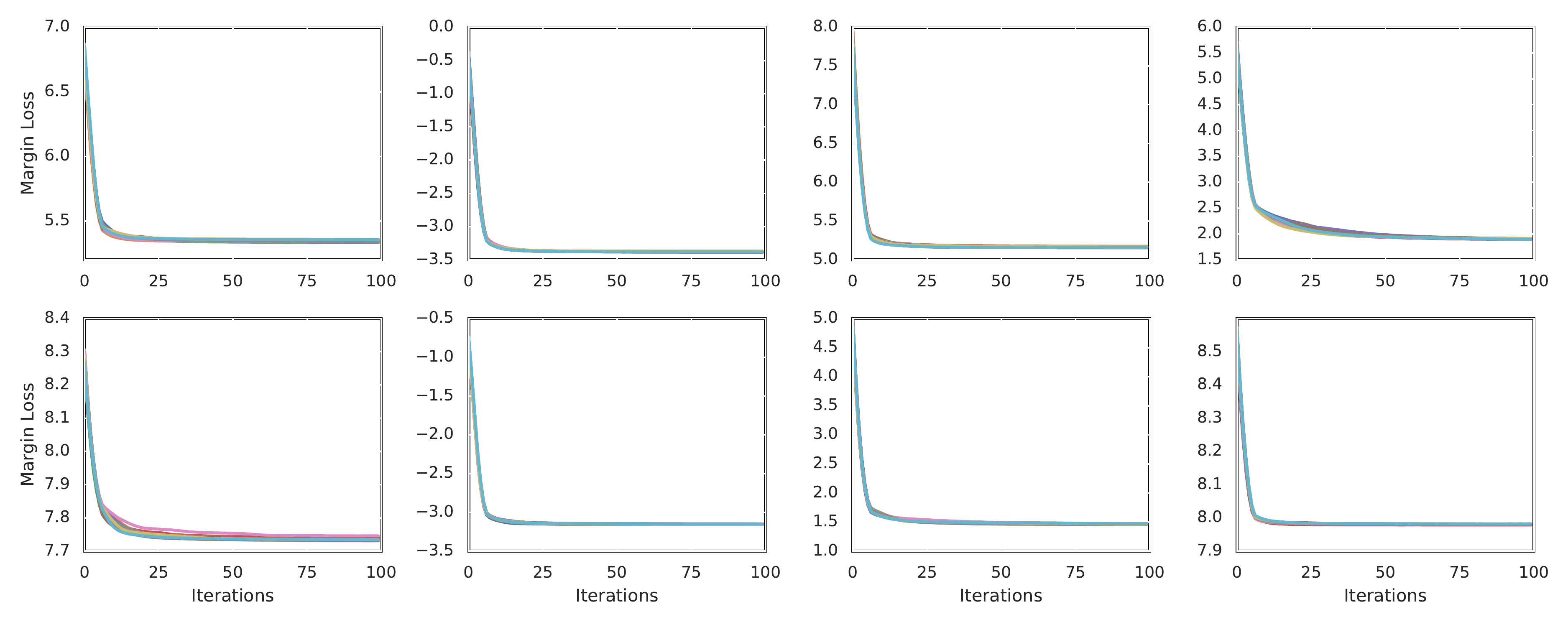}
		
		{\caption{Convergence on randomly sampled images}\label{fig:attack_convergence_rand}}
	\end{subfigure}
	\begin{subfigure}{\textwidth}
		\includegraphics[width=\linewidth]{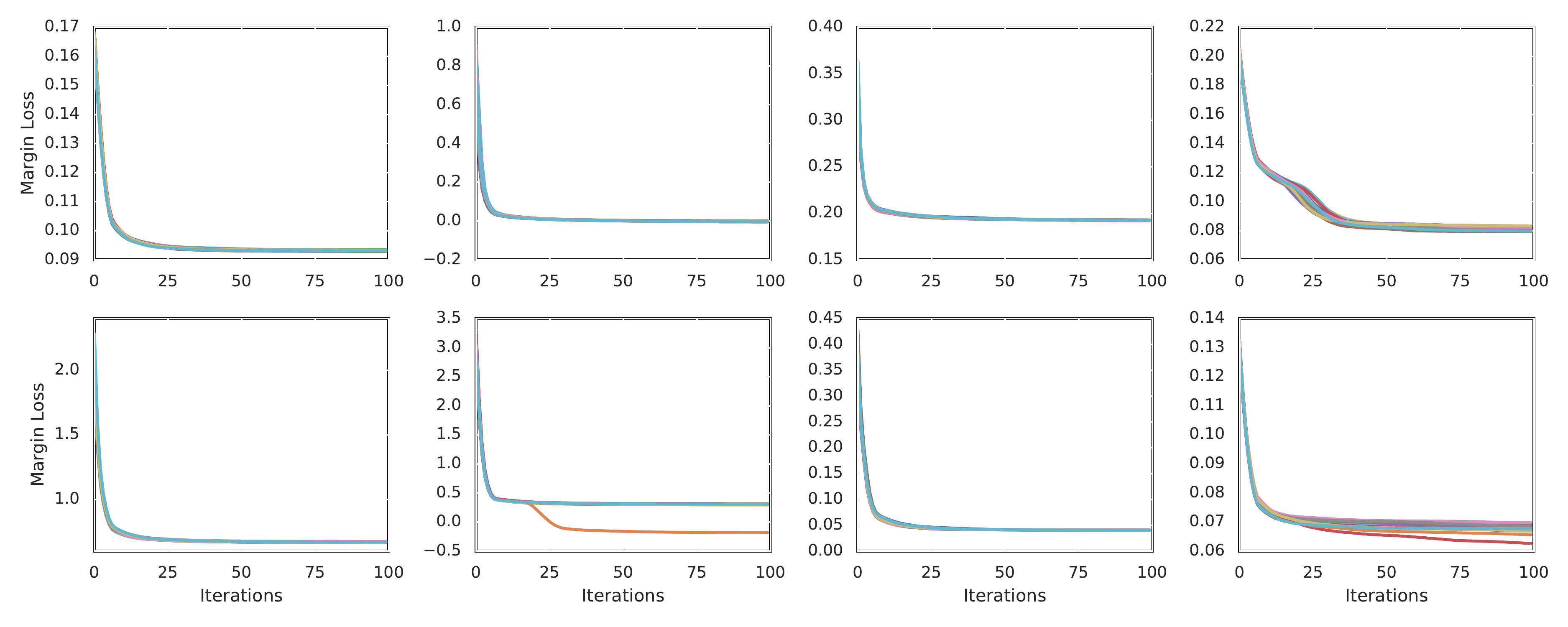}
		{\caption{Convergence on images where \multitargeted{} attack succeeded but untargeted PGD did not} \label{fig:attack_convergence_mislabeled}}
	\end{subfigure}
	
	\caption{\textbf{Convergence of PGD.} Each plot shows the convergence of the adversary loss on the same image, across 20 random restarts.
		On randomly sampled images (top), the loss converges to tightly clustered values.
		On images where PGD did not find optimal perturbations (bottom), we observe variation in perturbation strength across different restarts for two images in the bottom row.
	}
	\label{fig:attack_convergence}
\end{figure}

\section{Additional Experimental Results}
\label{sec:add_exp_results}
\textbf{$L_2$ robustness.}
We ran several short experiments to ensure our results hold for $L_2$ in addition to $L_\infty$ robustness. We use 4K labeled and 32K unlabeled examples.
On \cifar{} at $L_2$ radius $\epsilon = 0.87$, which encloses the $L_\infty$ $\epsilon=4/255$ ball, the purely supervised model achieves 32.7\% robust accuracy, the supervised oracle achieves 53.9\%, and UAT almost matches this, with 55.2\% robust accuracy. 
This represents a 21\% absolute gain from using unlabeled data, which captures over 90\% of the oracle improvement, without using additional labels.
We observe similar results for $\epsilon = 0.435$, which encloses the $L_\infty$ $\epsilon=2/255$ ball, of 47.3\% / 70.3\% / 66.3\% for the purely supervised baseline, supervised oracle, and UAT, respectively.

\textbf{Number of necessary labels.}
To study the minimum number of labels required while maintaining robustness, we also train \cifar{} models using fewer labels. 
In the body of the paper, we report that with 4K labels (and 32K unlabeled examples), UAT achieves 54.1\% robust accuracy, compared to 55.5\% for the supervised oracle which uses 36K labeled examples. 
We also trained models using 2K and 1K labels, which yield robust accuracies of 51.9\% and 47.7\% respectively. 
Thus, there is some loss of robustness -- with 4K labels, UAT almost exactly matches the performance of the supervised oracle, with only a 1.4\% gap, whereas with 2K and 1K labels, the gap is larger. 
However, even in this regime, UAT still achieves significant adversarial robustness.

\section{Proof of Theorem 1}
\label{sec:proof_gaussian_model}

\subsection{Preliminaries}

We first provide the following two concentration inequalities which we will use to bound our main quantities of interest.

\begin{lemma}[Concentration of $\chi$-squared distribution]
Let $X \sim \mathcal{N}(0, \sigma^2 I_n)$. Then, provided $\alpha^2 > 2n \sigma^2$,
\[
\PP(\| X \|^2 \geq \alpha^2) \leq e^{-\alpha^2/(20 \sigma^2)}.
\]
\end{lemma}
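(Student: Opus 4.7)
The statement is the standard Gaussian norm / chi-squared tail bound, and I would prove it by a Chernoff argument applied to $\|X\|^2/\sigma^2$, which is exactly a $\chi^2_n$ random variable. Write $Y \defeq \|X\|^2/\sigma^2 = \sum_{i=1}^n Z_i^2$ with $Z_i \sim \mathcal{N}(0,1)$ i.i.d., and set $t \defeq \alpha^2/\sigma^2$, so the claim becomes $\PP(Y \geq t) \leq e^{-t/20}$ whenever $t > 2n$. This reformulation removes $\sigma$ from the problem and makes the role of the hypothesis $\alpha^2 > 2n\sigma^2$ transparent: it is exactly the regime in which the Chernoff exponent dominates.

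\textbf{Main steps.} First, I would record the moment generating function of a standard chi-squared: for any $\lambda < 1/2$,
\[
\E[e^{\lambda Y}] = \prod_{i=1}^n \E[e^{\lambda Z_i^2}] = (1 - 2\lambda)^{-n/2}.
\]
Then by Markov's inequality,
\[
\PP(Y \geq t) \leq e^{-\lambda t}(1-2\lambda)^{-n/2}
\]
for every $\lambda \in (0,1/2)$. Next I would make a convenient fixed choice (rather than optimize in closed form), namely $\lambda = 1/4$, yielding
\[
\PP(Y \geq t) \leq e^{-t/4}\, 2^{n/2}.
\]
Finally, using the hypothesis $t > 2n$ together with $\tfrac{1}{2}\ln 2 < \tfrac{1}{2} < \tfrac{1}{4} - \tfrac{1}{20}$, I get
\[
-\tfrac{t}{4} + \tfrac{n}{2}\ln 2 \;\leq\; -\tfrac{t}{4} + \tfrac{t}{4}\ln 2 \;\leq\; -\tfrac{t}{20},
\]
so $\PP(Y \geq t) \leq e^{-t/20}$, which in the original variables is $\PP(\|X\|^2 \geq \alpha^2) \leq e^{-\alpha^2/(20\sigma^2)}$, as claimed.

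\textbf{Main obstacle.} There is no real obstacle here; it is a routine Chernoff computation. The only care needed is the arithmetic that justifies the constant $20$: one must pick $\lambda$ small enough that $(1-2\lambda)^{-n/2}$ can be absorbed into a fraction of $e^{-\lambda t}$ using $t \geq 2n$, but large enough that the residual exponent is still a nontrivial multiple of $t$. The choice $\lambda = 1/4$ with the slack $1/4 - 1/20 = 1/5 > \tfrac{1}{2}\ln 2$ and $t \geq 2n$ accomplishes both simultaneously. If a tighter constant were desired one could instead optimize over $\lambda$ and recover the familiar Laurent--Massart bound, but the stated constant $20$ is comfortably achieved by the fixed-$\lambda$ argument above.
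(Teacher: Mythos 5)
Your proof is correct, and it is essentially self-contained where the paper is not: the paper disposes of this lemma in one line by citing Lemma 1 of Laurent--Massart \cite{laurent2000adaptive} (the bound $\PP(Y \geq n + 2\sqrt{nx} + 2x) \leq e^{-x}$, which with $x = \alpha^2/(20\sigma^2)$ and $n \leq \alpha^2/(2\sigma^2)$ indeed yields the claim, since $\tfrac{1}{2} + \tfrac{2}{\sqrt{40}} + \tfrac{1}{10} < 1$). Your fixed-$\lambda$ Chernoff computation with $\lambda = \tfrac{1}{4}$ proves the same thing from scratch, which is arguably preferable in a paper that is already doing explicit Chernoff bounds elsewhere (cf.\ Lemma \ref{lem:l1norm}); what you give up is only the sharper constant that optimizing $\lambda$ (i.e., Laurent--Massart) would provide, and the constant $20$ does not need sharpness here. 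One small correction to your prose: the numerical justification you state, ``$\tfrac{1}{2}\ln 2 < \tfrac{1}{2} < \tfrac{1}{4} - \tfrac{1}{20}$'' (and later ``$\tfrac{1}{4} - \tfrac{1}{20} = \tfrac{1}{5} > \tfrac{1}{2}\ln 2$''), is false as written, since $\tfrac{1}{2}\ln 2 \approx 0.347 > 0.2$. The relevant coefficient after substituting $n \leq t/2$ into $\tfrac{n}{2}\ln 2$ is $\tfrac{1}{4}\ln 2 \approx 0.173$, and the inequality you actually need is $\tfrac{1}{4}\ln 2 \leq \tfrac{1}{4} - \tfrac{1}{20}$, equivalently $\ln 2 \leq \tfrac{4}{5}$, which holds. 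Your displayed chain
\[
-\tfrac{t}{4} + \tfrac{n}{2}\ln 2 \;\leq\; -\tfrac{t}{4} + \tfrac{t}{4}\ln 2 \;\leq\; -\tfrac{t}{20}
\]
is correct as it stands; only the side remark justifying its second step needs the factor $\tfrac{1}{4}$ in place of $\tfrac{1}{2}$.
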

\begin{proof}
The result follows from application of Lemma 1 from \cite{laurent2000adaptive}.
\end{proof}

\begin{lemma}
\label{lem:l1norm}
Let $X\sim\mathcal{N}(0, \sigma^2 I_m)$ in $\RR^m$.
Then
\[
\PP(\smfrac{1}{m}\|X\|_1 \ge a) \le 2^m \exp -\frac{ma^2}{2\sigma^2}
\]
\end{lemma}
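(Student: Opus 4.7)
The plan is to replace the absolute values in $\|X\|_1$ by a supremum over sign patterns, apply the standard Gaussian tail bound to each fixed sign pattern, and conclude by a union bound. Concretely, observe the identity
\[
\|X\|_1 = \sum_{i=1}^m |X_i| = \max_{\epsilon \in \{\pm 1\}^m} \sum_{i=1}^m \epsilon_i X_i,
\]
where the max is attained at $\epsilon_i = \operatorname{sign}(X_i)$. Thus the event $\{\|X\|_1 \geq ma\}$ is contained in the union over $\epsilon \in \{\pm 1\}^m$ of the events $\{\langle \epsilon, X \rangle \geq ma\}$, so I would first bound $\PP(\|X\|_1 \geq ma)$ by $\sum_{\epsilon} \PP(\langle \epsilon, X \rangle \geq ma)$, with $2^m$ terms in the sum.

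Next I would fix any $\epsilon \in \{\pm 1\}^m$ and note that $\langle \epsilon, X \rangle = \sum_i \epsilon_i X_i$ is a linear combination of independent $\mathcal{N}(0, \sigma^2)$ variables, hence is distributed as $\mathcal{N}(0, m \sigma^2)$ (since $\sum_i \epsilon_i^2 = m$). The standard one-sided Gaussian tail bound $\PP(Z \geq t) \leq \exp(-t^2/(2 \operatorname{Var}(Z)))$ applied with $t = ma$ and variance $m \sigma^2$ gives
\[
\PP(\langle \epsilon, X \rangle \geq ma) \leq \exp\!\left(-\frac{(ma)^2}{2 m \sigma^2}\right) = \exp\!\left(-\frac{m a^2}{2 \sigma^2}\right).
\]

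Summing this bound over the $2^m$ sign patterns yields exactly $2^m \exp(-m a^2 / (2\sigma^2))$, which is the claimed inequality. There is no real obstacle here; the only subtlety is writing $\|X\|_1$ as a maximum over a finite set so that a union bound applies (as opposed to a continuous supremum that would require chaining). The resulting bound is only nontrivial when $a$ is large enough that $a^2/(2\sigma^2) > \log 2$, which is the regime in which the lemma is presumably invoked elsewhere in the proof of Theorem~\ref{thm:uat_ft_gaussian}.
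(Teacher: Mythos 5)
Your proof is correct, but it takes a genuinely different route from the paper's. The paper proves this lemma by a direct Chernoff bound on the moment generating function of $\|X\|_1$ with parameter $t = ma/\sigma^2$: it factors $\mathbb{E}[\exp(\frac{a}{\sigma^2}\|X\|_1)]$ into a product of $m$ identical half-normal MGFs, evaluates each as $\exp(\frac{a^2}{2\sigma^2})(1+\mathrm{erf}\,\frac{a}{\sigma\sqrt{2}})$, and then bounds $1+\mathrm{erf}(\cdot) \le 2$ to produce the $2^m$ factor. You instead write $\|X\|_1 = \max_{\epsilon \in \{\pm 1\}^m}\langle \epsilon, X\rangle$, take a union bound over the $2^m$ sign patterns, and apply the one-sided Gaussian tail bound to each $\mathcal{N}(0, m\sigma^2)$ variable; the $2^m$ arises from the union bound rather than from bounding the erf term. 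Both arguments land on exactly the same inequality, and in fact the two sources of the $2^m$ factor are morally the same slack. Your approach is more elementary (no half-normal MGF computation) and mirrors the sign-enumeration trick the paper itself uses in the proof of Lemma~\ref{lem:zbar_norm}; the paper's MGF route is more self-contained in the sense that it could in principle be sharpened by retaining the $(1+\mathrm{erf})^m$ factor, though as executed it discards it. One shared caveat, which you correctly flag only implicitly: both arguments require $a \ge 0$ for the tail/Chernoff step, which is the only regime in which the lemma is used.
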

\begin{proof}
Forming the Chernoff bound with $t=\frac{ma}{\sigma^2}$, we have:
\begin{align*}
\PP(\smfrac{1}{m}\|X\|_1 \ge a) &\le \exp -\frac{ma^2}{\sigma^2} \mathev[\exp \frac{a}{\sigma^2} \|X\|_1] \\
&= \exp -\frac{ma^2}{\sigma^2} \left( \mathev[\exp \frac{a}{\sigma^2} |X_1|] \right)^m \\
&= \exp -\frac{ma^2}{\sigma^2} \left( \exp \frac{a^2}{2\sigma^2} (1+\mathrm{erf}\smfrac{a}{\sigma\sqrt{2}}) \right)^m \\
&= \exp -\frac{ma^2}{\sigma^2} \exp \frac{ma^2}{2\sigma^2} \left( (1+\mathrm{erf}\smfrac{a}{\sigma\sqrt{2}}) \right)^m \\
&\le 2^m \exp -\frac{ma^2}{2\sigma^2}
\end{align*}
\end{proof}

\subsection{Main Proof}
To bound the robustness, there are two main quantities of interest. 
First, we need to bound the norm of $\vzbar = \frac{1}{m} \sum_{i=1}^{m} \yhi x_i$, which controls the smoothness of the classifier (Lemma \ref{lem:zbar_norm}).
Second, we need to bound the inner product $\braket{\vzbar}{\theta^*}$, which controls how well the classifier fits the data (Lemma \ref{lem:zbar_inner_product}).

The main difficulty is that $\sum_{i=1}^{m} \yhi x_i$ is not Gaussian distributed.
In particular, while $\sum_{i=1}^{m} y_i x_i$ follows a Gaussian distribution, our quantity of interest does not, due to the dependence of $\yhi$ on $x_i$. \\ %

\begin{lemma}
\label{lem:zbar_norm}
Given a $(\thetastar, \sigma)$ Gaussian model in $\RR^d$, let $h: \RR^d \rightarrow \{-1, +1\}$ be any classifier.
If $\vzbar = \frac{1}{m} \sum_{i=1}^{m} \yhi x_i$ is the sample mean vector of $m$ i.i.d.\ samples based on predicted classes $\yhi=h(x_i)$, then we have
\[
\PP\left( \| \vzbar \|_2 \geq (1+c) \| \theta^* \|_2 + 2 \sigma \sqrt{\frac{d}{m}} \right) \leq e^{-6\sqrt{d}/5},
\]
with $c = \frac{\sqrt{20} \sigma}{\|\theta^*\|} \sqrt{\frac{\sqrt{d}}{m} + \log 2}$. 
\end{lemma}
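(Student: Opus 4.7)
I would first decompose each sample as $x_i = y_i \theta^* + g_i$ with $g_i \sim \mathcal{N}(0, \sigma^2 I_d)$ i.i.d., yielding
\[
\vzbar = \frac{1}{m}\sum_{i=1}^m \hat{y}_i x_i = a\,\theta^* + \bar g, \quad a = \frac{1}{m}\sum_{i=1}^m \hat{y}_i y_i \in [-1,1], \quad \bar g = \frac{1}{m}\sum_{i=1}^m \hat{y}_i g_i.
\]
Since $|a| \le 1$, the triangle inequality gives $\|\vzbar\|_2 \le \|\theta^*\|_2 + \|\bar g\|_2$, so it suffices to show that $\|\bar g\|_2 \le c\|\theta^*\|_2 + 2\sigma\sqrt{d/m}$ with probability at least $1 - e^{-6\sqrt d/5}$.

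\textbf{Handling the dependence between $\hat{y}_i$ and $g_i$.} This is the main obstacle: because $\hat{y}_i = h(x_i)$ depends on $g_i$, the vector $\bar g$ is not Gaussian, so one cannot directly invoke the $\chi^2$-concentration lemma from the preliminaries. I would circumvent this by passing to the sign-agnostic bound
\[
\|\bar g\|_2 \le \max_{\epsilon \in \{\pm 1\}^m} \left\| \frac{1}{m}\sum_{i=1}^m \epsilon_i g_i \right\|_2,
\]
which is valid simply because $(\hat{y}_1, \ldots, \hat{y}_m)$ is some (random) element of $\{\pm 1\}^m$. For each \emph{fixed} $\epsilon$, the vector $\tfrac{1}{m}\sum_i \epsilon_i g_i$ is exactly $\mathcal{N}(0, (\sigma^2/m)\, I_d)$, so the first preliminary lemma (applied with effective variance $\sigma^2/m$) combined with a union bound over the $2^m$ sign patterns yields
\[
\PP\bigl(\|\bar g\|_2 \ge \lambda \bigr) \le 2^m \exp\!\left(-\frac{m\lambda^2}{20\sigma^2}\right), \quad\text{valid for } \lambda^2 > 2d\sigma^2/m.
\]

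\textbf{Matching the threshold.} Setting $\lambda = c\|\theta^*\|_2 + 2\sigma\sqrt{d/m}$ and using $(u+v)^2 \ge u^2 + v^2$ for nonnegative $u,v$,
\[
\frac{m\lambda^2}{20\sigma^2} \;\ge\; \frac{m\,(c\|\theta^*\|_2)^2}{20\sigma^2} + \frac{d}{5} \;=\; \sqrt d + m\log 2 + \frac{d}{5},
\]
where the last equality uses the explicit choice $c^2 = (20\sigma^2/\|\theta^*\|_2^2)(\sqrt d/m + \log 2)$. The $m\log 2$ cancels the $2^m$ union-bound penalty, leaving a tail of $\exp(-\sqrt d - d/5) \le \exp(-6\sqrt d/5)$ (the last inequality using $d \ge 1$). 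The role of the $\log 2$ inside $c$ is precisely to absorb the cost of the union bound over sign patterns, and this bookkeeping is the only delicate piece in the argument; the second preliminary lemma on the $L_1$ norm of a Gaussian is not needed here and will presumably enter elsewhere (e.g.\ in bounding $\braket{\vzbar}{\theta^*}$).
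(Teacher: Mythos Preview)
Your proposal is correct and mirrors the paper's proof essentially step for step: the same decomposition $x_i = y_i\theta^* + g_i$, the same trivial bound $|a|\le 1$ on the mean part, the same union bound over all $2^m$ sign patterns to decouple $\hat y_i$ from $g_i$, the same application of the $\chi^2$-concentration lemma to each fixed-sign Gaussian sum, and the same bookkeeping showing that the $\log 2$ in $c$ absorbs the $2^m$ union-bound factor to leave $e^{-\sqrt d - d/5}\le e^{-6\sqrt d/5}$. Your remark that the $L_1$ preliminary lemma is reserved for the inner-product bound is also exactly how the paper uses it.
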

\begin{proof}
We have
\begin{align*}
\left\| \frac{1}{m} \sum_{i=1}^m \hat{y}_i x_i \right\|_2 = \left\| \frac{1}{m} \sum_{i=1}^m \hat{y}_i (y_i \theta^* + z_i ) \right\|_2 & \leq \frac{\| \theta^* \|}{m} \sum_{i=1}^m \hat{y}_i y_i + \left\| \frac{1}{m} \sum_{i=1}^m \hat{y}_i z_i \right\|_2 \\
& \leq \| \theta^* \| + \left\| \frac{1}{m} \sum_{i=1}^m \hat{y}_i z_i \right\|_2 \; ,
\end{align*}
where $z_i \sim \mathcal{N}(0, \sigma^2 I)$.
We therefore have
\begin{align*}
\PP \left( \left\| \frac{1}{m} \sum_{i=1}^m \hat{y}_i x_i \right\|_2 \geq t \right) & \leq \PP \left( \left\| \frac{1}{m} \sum_{i=1}^m \hat{y}_i z_i \right\|_2 \geq t - \| \theta^* \| \right) \\
          & \leq  \PP \left( \bigcup_{s_1 = \pm 1, \dots, s_m = \pm 1} \left\| \frac{1}{m} \sum_{i=1}^m s_i z_i \right\|_2 \geq t - \| \theta^* \| \right) \\
           & \leq \sum_{s} \PP \left( \frac{1}{m} \left\| \sum_{i=1}^m s_i z_i \right\|_2 \geq t - \| \theta^* \| \right)
\end{align*}
Observe that $\sum_{i=1}^m s_i z_i \sim \mathcal{N} (0, m \sigma^2)$. Now, using the concentration of measure result, whenever $t-\| \theta^* \| \ge \sqrt{\frac{2}{m}}\sigma$ we have:
\[
\PP \left( \left\| \sum_{i=1}^m s_i z_i \right\|^2_2 \geq m^2 (t - \| \theta^* \|)^2 \right) \leq e^{-m^2 (t - \| \theta \|)^2 / (20 m \sigma ^2) } = e^{-m (t - \| \theta \|)^2 / (20 \sigma ^2) } 
\]
Hence, we obtain
\[
\PP \left( \left\| \frac{1}{m} \sum_{i=1}^m \hat{y}_i x_i \right\|_2 \geq t \right) \leq 2^m e^{-m (t - \| \theta^* \|)^2 / (20 \sigma ^2)}.
\]
Let $t = (1+c) \| \theta^* \| + 2 \sigma \sqrt{\frac{d}{m}}$. Then, we have
\begin{align*}
\PP \left( \left\| \frac{1}{m} \sum_{i=1}^m \hat{y}_i x_i \right\|_2 \geq t \right) & \leq 2^m e^{-m \left( c \| \theta^* \| + 2 \sigma \sqrt{\frac{d}{m}} \right)^2 / (20 \sigma^2)} \\
& \leq 2^m e^{-m \left( c^2 \| \theta^* \|_2^2 / (20 \sigma^2) \right) } e^{-m 4 \sigma^2 \frac{d}{m} / (20 \sigma^2)} \\
& = 2^m e^{-m \left( c^2 \| \theta^* \|_2^2 / (20 \sigma^2) \right) } e^{-d/5}
\end{align*}
Now let $c = \frac{\sqrt{20} \sigma}{\| \theta^* \|_2} \sqrt{\frac{\sqrt{d}}{m} + \log 2}$. Then, the above probability is given by
\[
2^m e^{-m \left( \frac{\sqrt{d}}{m} + \log 2 \right)} e^{-d/5} = e^{-\sqrt{d}} e^{-d/5} \le e^{-\sqrt{d}} e^{-\sqrt{d}/5} = e^{-6\sqrt{d}/5}.
\]
\end{proof}

\begin{lemma}
\label{lem:zbar_inner_product}
Under the conditions of Lemma \ref{lem:zbar_norm}, let $p = \mathev[\I[h(x) = y]]$ denote the accuracy of classifier $h$. Then we have
\[
\PP\left( \braket{\vzbar}{\theta^*} \leq \tchern \| \theta^* \|^2 - \sqrt{2} \| \theta^* \| \sigma \sqrt{\frac{\log(1/\delta)}{m} + \log 2} \right) \leq (\tailprobchern)^{mp} + \delta.
\]
\end{lemma}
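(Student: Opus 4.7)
The plan is to decompose $\braket{\vzbar}{\theta^*}$ into a ``signal'' part and a ``noise'' part, control each independently, and then take a union bound.

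First I would write $x_i = y_i \theta^* + z_i$ with $z_i \sim \mathcal{N}(0,\sigma^2 I_d)$, and let $s_i = \yhi y_i \in \{\pm 1\}$ (so $s_i = 1$ iff $h$ classifies $x_i$ correctly). Then
\begin{equation*}
\braket{\vzbar}{\theta^*} \;=\; \frac{\|\theta^*\|^2}{m}\sum_{i=1}^m s_i \;+\; \frac{1}{m}\sum_{i=1}^m \yhi \braket{z_i}{\theta^*}.
\end{equation*}
This reduction makes the two sources of randomness clean: the first term depends only on the accuracy indicators $\I[h(x_i)=y_i]$, while the second term is a linear combination of Gaussian noise projections with signs $\yhi$ that depend on the $x_i$.

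For the signal term, the number of correct predictions $\sum_i \I[h(x_i)=y_i]$ is a sum of i.i.d.\ Bernoulli$(p)$ variables with mean $mp$. A multiplicative Chernoff lower tail bound with deviation $\tfrac{1}{8}$ gives that $\sum_i \I[h(x_i)=y_i] \geq \tfrac{7}{8} mp$ except with probability at most $(\tailprobchern)^{mp}$, which after the change of variables $\sum_i s_i = 2\sum_i \I[h(x_i)=y_i] - m$ becomes $\sum_i s_i \geq \tchern \, m$. This yields the first term of the stated lower bound.

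For the noise term, the obstacle is that $\yhi$ depends on $z_i$, so $\sum_i \yhi\braket{z_i}{\theta^*}$ is not Gaussian. I handle this exactly as in Lemma~\ref{lem:zbar_norm}: union bound over all $2^m$ possible sign patterns. For any fixed $s \in \{\pm 1\}^m$, $\sum_i s_i \braket{z_i}{\theta^*} \sim \mathcal{N}(0, m\sigma^2 \|\theta^*\|^2)$, and a standard one-sided Gaussian tail bound gives
\begin{equation*}
\PP\!\left(-\sum_{i=1}^m s_i \braket{z_i}{\theta^*} \geq t\right) \leq \exp\!\left(-\frac{t^2}{2 m \sigma^2 \|\theta^*\|^2}\right).
\end{equation*}
A union bound over the $2^m$ sign patterns bounds the same tail for the data-dependent signs $\yhi$ by $2^m \exp(-t^2/(2m\sigma^2\|\theta^*\|^2))$; setting this equal to $\delta$ and solving for $t$ gives
\begin{equation*}
\frac{1}{m}\sum_{i=1}^m \yhi \braket{z_i}{\theta^*} \;\geq\; -\sqrt{2}\,\|\theta^*\|\,\sigma\,\sqrt{\frac{\log(1/\delta)}{m} + \log 2}
\end{equation*}
with probability at least $1-\delta$. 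A final union bound between the Chernoff failure event and the Gaussian failure event yields the claimed total failure probability $(\tailprobchern)^{mp} + \delta$. The main subtlety is purely the sign-pattern union bound that decouples $\yhi$ from $z_i$; everything else is bookkeeping with standard concentration inequalities.
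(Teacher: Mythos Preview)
Your proof is correct and follows essentially the same decomposition and Chernoff argument for the signal term as the paper. The only minor technical difference is in the noise term: rather than a union bound over the $2^m$ sign patterns, the paper bounds $\sum_i \hat{y}_i \braket{z_i}{\theta^*} \geq -\sum_i |\braket{z_i}{\theta^*}|$ and then invokes the $\ell_1$-norm concentration of Lemma~\ref{lem:l1norm}, which produces the identical $2^m$ factor and the same threshold, so the two routes are equivalent.
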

\begin{proof}
We write
\[
\PP\left( \braket{\hat{w}}{\theta^*} \leq t \right) = \PP\left( \braket{\frac{1}{m} \sum_{i=1}^m \hat{y}_i x_{i}}{\theta^*} \leq t \right) = \PP\left( \braket{\frac{1}{m} \sum_{i=1}^m \hat{y}_i (y_i \theta^* + z_{i})}{\theta^*} \leq t \right),
\]
where $z_i$ are $\mathcal{N}(0, \sigma^2 I)$.
The expression inside the probability is equal to
\[
\frac{\| \theta^* \|_2^2}{m} \sum_{i=1}^m \hat{y}_i y_i + \frac{1}{m} \sum_{i=1}^m \braket{\hat{y}_i z_i}{\theta^*}
\]
We bound this expression from below with
\[
\frac{\| \theta^* \|_2^2}{m} \sum_{i=1}^m \hat{y}_i y_i - \frac{1}{m} \sum_{i=1}^m |\braket{z_i}{\theta^*}|
\]
(That is, we consider the worst-case scenario where the random variables $\hat{y}_i$ are given by the negative of the sign of $\braket{z_i}{\theta^*}$).
We therefore get that
\begin{align*}
\PP\left( \braket{\hat{w}}{\theta^*} \leq t - t' \right) & \leq \PP \left( \frac{\| \theta^* \|_2^2}{m} \sum_{i=1}^m \hat{y}_i y_i - \frac{1}{m} \sum_{i=1}^m |\braket{z_i}{\theta^*}| \leq t - t' \right) \\
& \leq \PP \left( \frac{\| \theta^* \|_2^2}{m} \sum_{i=1}^m \hat{y}_i y_i \leq t \right) + \PP \left( \frac{1}{m} \sum_{i=1}^m |\braket{z_i}{\theta^*}| \geq t' \right) \\
& = \PP \left( \frac{2 \| \theta^* \|_2^2}{m} \sum_{i=1}^m \I[y_i = \hat{y}_i] - \| \theta^* \|_2^2 \leq t \right) + \PP \left( \frac{1}{m} \sum_{i=1}^m |\braket{z_i}{\theta^*}| \geq t' \right)
\end{align*}
We treat the first term.
Let $t = \tchern \| \theta^* \|^2$. The first probability term is hence given by
\[
\PP\left( \sum_{i=1}^m \I[y_i = \hat{y}_i] \leq \frac{7}{8} mp \right) \leq \exp \left( -\frac{mp}{2 \cdot 8^2} \right) \leq (\tailprobchern)^{mp}.
\]
using a Chernoff bound.%

We have the following concentration bound on the $\ell_1$ norm of the Gaussian vectors $U \sim \mathcal{N}(0, \| \theta^* \|^2 \sigma^2)$:
\[
\PP\left( \frac{1}{m} \| U \|_1 \geq t' \right) \leq 2^m \exp\left( -t'^2 m / (2 \| \theta^* \|^2 \sigma^2) \right),
\]
by applying Lemma \ref{lem:l1norm}. 
We set $t' = \sqrt{2} \| \theta^* \| \sigma \sqrt{\frac{\log(1/\delta)}{m} + \log 2}$, and when plugging in the above formula, we obtain
\[
\PP \left( \frac{1}{m} \sum_{i=1}^m |\braket{z_i}{\theta^*}| \geq t' \right) \leq \delta.
\]
Hence, we obtain the desired bound.
\end{proof}

We now use these results to achieve the final result in Theorem \ref{thm:uat_ft_gaussian}.
In what follows, we assume:
\begin{itemize}
    \item $(x_1, y_1), \ldots, (x_m, y_m)$ are drawn i.i.d.\ from a $(\thetastar, \sigma)$ Gaussian model in $\RR^d$ with mean norm $\|\thetastar\|_2 = \sqrt{d}$
    \item $h: \RR^d \rightarrow \{-1, +1\}$ is a base classifier with accuracy $p > \smfrac{3}{4}$, where $p = \mathev[\I[h(x) = y]]$
    \item $\vzbar \in \RR^d$ is the sample mean vector $\vzbar = \frac{1}{m} \sum_{i=1}^{m} \yhi x_i$, where $\yhi = h(x_i)$
    \item $\what \in \RR^d$ is the unit vector in the direction of $\vzbar$, i.e.,
    $\what = \sfrac{\vzbar}{\norm{\vzbar}_2}$
    \item $c$ denotes the constant in Lemma \ref{lem:zbar_norm}
\end{itemize}

\begin{lemma}
  \label{lem:normalized_inner_product}
  Under these assumptions,
  \[
    \prob\brackets*{ 
    \ip{\what, \thetastar} \, \leq \, \frac{ \tchern \sqrt{dm} - 
    \sqrt{d + 2m \sigma^2 \log 2}}{(1 + c)\sqrt{m} + 2\sigma} 
    }
  \]
  is bounded above by $\exp(-6\sqrt{d}/5) + (\tailprobchern)^{mp} + \exp(-d/2\sigma^2)$.
\end{lemma}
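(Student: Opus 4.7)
The plan is to observe that $\ip{\what, \thetastar} = \ip{\vzbar, \thetastar} / \norm{\vzbar}_2$, and therefore to obtain a lower bound on this ratio it suffices to combine an upper bound on the denominator (from Lemma \ref{lem:zbar_norm}) with a lower bound on the numerator (from Lemma \ref{lem:zbar_inner_product}), then union bound over the two failure events. All the quantitative work has already been done in those two lemmas; the task here is essentially bookkeeping plus picking the right free parameter.

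First I would substitute $\norm{\thetastar}_2 = \sqrt{d}$ into Lemma \ref{lem:zbar_norm} to get
\[
\norm{\vzbar}_2 \;\leq\; (1+c)\sqrt{d} + 2\sigma\sqrt{d/m} \;=\; \sqrt{d/m}\bigl((1+c)\sqrt{m} + 2\sigma\bigr)
\]
except on an event of probability at most $e^{-6\sqrt{d}/5}$. This already matches the denominator appearing in the target expression (up to the common factor $\sqrt{d/m}$, which will cancel with a matching factor pulled out of the numerator).

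Next I would apply Lemma \ref{lem:zbar_inner_product}, again with $\norm{\thetastar}=\sqrt{d}$, to obtain
\[
\ip{\vzbar,\thetastar} \;\geq\; \tchern d - \sqrt{2d}\,\sigma \sqrt{\tfrac{\log(1/\delta)}{m} + \log 2},
\]
with failure probability at most $(\tailprobchern)^{mp} + \delta$. Factoring $\sqrt{d/m}$ out of this lower bound converts it to $\sqrt{d/m}\bigl(\tchern \sqrt{dm} - \sigma\sqrt{2\log(1/\delta) + 2m\log 2}\bigr)$. The key (and only nonroutine) step is choosing $\delta$ so that the noise term matches the target: setting $2\sigma^2\log(1/\delta) = d$, i.e.\ $\delta = \exp(-d/(2\sigma^2))$, converts the inner square root into $\sqrt{d + 2m\sigma^2 \log 2}$, exactly the expression in the statement, and contributes the third failure-probability term $\exp(-d/(2\sigma^2))$.

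Finally I would combine these two bounds: divide the lower bound on $\ip{\vzbar,\thetastar}$ by the upper bound on $\norm{\vzbar}_2$, note that the prefactor $\sqrt{d/m}$ cancels between numerator and denominator, and obtain the claimed lower bound on $\ip{\what,\thetastar}$. The union bound over the two failure events in Lemmas \ref{lem:zbar_norm} and \ref{lem:zbar_inner_product} gives the stated probability $e^{-6\sqrt{d}/5} + (\tailprobchern)^{mp} + e^{-d/(2\sigma^2)}$. The only real obstacle is recognizing the correct choice of the free parameter $\delta$; once that is identified, the rest is arithmetic and a union bound.
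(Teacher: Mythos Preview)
Your proposal is correct and essentially identical to the paper's own proof: the paper likewise invokes Lemma \ref{lem:zbar_inner_product} and Lemma \ref{lem:zbar_norm}, sets $\delta = \exp(-d/2\sigma^2)$, conditions on both good events, and then divides the numerator bound by the denominator bound to obtain the stated inequality. The only cosmetic difference is that the paper carries the $\sqrt{d}$ and $\sqrt{m}$ factors through the algebra explicitly rather than factoring out $\sqrt{d/m}$ as you do, but the computation is the same.
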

\begin{proof}
    By Lemma \ref{lem:zbar_inner_product}, we have
\begin{align*}
\prob \brackets*{ \ip{\vzbar, \theta^*} \geq \tchern \| \theta \|^2 - \sqrt{2} \| \theta \| \sigma \sqrt{\frac{\log(1/\delta)}{m} + \log 2} } \geq 1 - (\tailprobchern)^{mp} - \delta.
\end{align*}
Further, by Lemma \ref{lem:zbar_norm}, we have
\[
\PP\left( \| \hat{w} \|_2 \leq (1+c) \| \theta \| + 2 \sigma \sqrt{\frac{d}{m}} \right) \geq 1 - e^{-6\sqrt{d}/5},
\]
Conditioning on both events with $\delta = \exp(-d/2\sigma^2)$, the overall failure probability is bounded by $\exp(-6\sqrt{d}/5) + (\tailprobchern)^{mp} + \exp(-d/2\sigma^2)$.
Then, we have
\begin{align*}
\ip{\what, \thetastar} &= \frac{\ip{\vzbar, \thetastar}}{\norm{z}_2} \\
&\geq \frac{\tchern d - \sqrt{2d}\sigma\sqrt{\frac{\log \sfrac{1}{\delta}}{m} + \log 2}}
{(1 + c)\sqrt{d} + 2\sigma\sqrt{\frac{d}{m}}} \\
&= \frac{\tchern \sqrt{dm} - \sigma\sqrt{2}\sqrt{\log \sfrac{1}{\delta} + m\log 2}}
{(1 + c)\sqrt{m} + 2\sigma} \\
&= \frac{\tchern \sqrt{dm} - \sqrt{d + 2\sigma^2 m \log 2}}
{(1 + c)\sqrt{m} + 2\sigma} \\
\end{align*}

\end{proof}

For ease of reference, we provide a relevant lemma proved in \cite{Schmidt18moredata}.

\begin{lemma}[\cite{Schmidt18moredata}]
  \label{lem:schmidt_lemma_20}
Assume a $(\thetastar, \sigma)$-Gaussian model.
Let $p \geq 1$, $\eps \geq 0$ be robustness parameters, and let $\what$ be a unit vector such that $\ip{\what, \thetastar} \geq \eps \norm{\what}_p^*$., where $\norm{\cdot}_p^*$ is the dual norm of $\norm{\cdot}_p$.
Then the linear classifier $f_{\what}$ has $\ell_p^\eps$-robust classification error at most
\[
  \exp\parens*{-\frac{\parens{\ip{\what, \thetastar} - \eps \norm{\what}_p^*}^2}{2 \sigma^2}} \; .
\]
\end{lemma}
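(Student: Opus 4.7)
\textbf{Proof proposal for Lemma \ref{lem:schmidt_lemma_20}.}

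The plan is to reduce the adversarial misclassification probability to a one-dimensional Gaussian tail bound. First, I would unpack the definition: the linear classifier is $f_{\what}(x) = \sign(\ip{\what, x})$, and the $\ell_p^\eps$-robust error is
\[
\mathev_{(x,y)}\!\brackets*{\I\!\brackets*{\exists \delta \text{ with } \norm{\delta}_p \leq \eps : f_{\what}(x+\delta) \neq y}}.
\]
Since $y$ is uniform in $\{\pm 1\}$ and the Gaussian model is symmetric under $(x,y) \mapsto (-x,-y)$, the two conditional error probabilities are equal, so it suffices to analyze the $y=+1$ case where $x \sim \normal(\thetastar, \sigma^2 I)$, and bound the probability that some $\delta$ with $\norm{\delta}_p \leq \eps$ produces $\ip{\what, x+\delta} < 0$.

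Next, I would push the adversary inside by noting that $\ip{\what, x+\delta} = \ip{\what, x} + \ip{\what, \delta}$, and that the worst-case perturbation minimizes $\ip{\what, \delta}$ subject to $\norm{\delta}_p \leq \eps$. By the definition of the dual norm, $\min_{\norm{\delta}_p \leq \eps} \ip{\what, \delta} = -\eps \norm{\what}_p^*$. Hence the event of successful adversarial misclassification is exactly
\[
\ip{\what, x} < \eps \norm{\what}_p^*.
\]

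Now, since $\what$ is a unit vector in the $\ell_2$ sense, the scalar $\ip{\what, x}$ is distributed as $\normal(\ip{\what, \thetastar}, \sigma^2)$. Writing $\mu = \ip{\what, \thetastar}$ and $\tau = \eps \norm{\what}_p^*$, the hypothesis $\mu \geq \tau$ means we are evaluating a lower tail bounded away from the mean. A standard Gaussian tail bound then gives
\[
\prob\!\brackets*{\ip{\what, x} < \tau} = \prob\!\brackets*{\normal(0,1) > \tfrac{\mu - \tau}{\sigma}} \leq \exp\!\parens*{-\frac{(\mu - \tau)^2}{2\sigma^2}},
\]
which is precisely the claimed bound. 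Combining with the symmetric $y=-1$ case by averaging (each contributes the same bound) completes the argument.

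The only nontrivial step is the dual-norm identity for the inner maximization; everything else is routine manipulation of Gaussians. There is no real obstacle, but care is needed to verify that the unit-norm hypothesis on $\what$ is taken in the $\ell_2$ sense (so that $\ip{\what, x}$ has variance exactly $\sigma^2$), while the robustness constraint uses the $\ell_p$ norm and its dual.
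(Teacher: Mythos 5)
Your proof is correct and takes the standard route for this lemma; the paper itself does not prove the statement but simply imports it from \cite{Schmidt18moredata}, and the proof there proceeds exactly as you describe — reduce the inner minimization over perturbations to $-\eps\norm{\what}_p^*$ via the dual norm, observe that $\ip{\what,x}$ is a one-dimensional Gaussian with mean $\ip{\what,\thetastar}$ and variance $\sigma^2$, and apply the Gaussian tail bound $\prob[\normal(0,1)>t]\leq \exp(-t^2/2)$. The one subtlety worth stating explicitly — that the unit-norm hypothesis on $\what$ is in the $\ell_2$ sense so the variance is exactly $\sigma^2$, while the perturbation budget lives in $\ell_p$ — you have already flagged correctly.
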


\begin{lemma}
	\label{lem:uatft_error}
  With probability at least $ 1 - [\exp(-6\sqrt{d}/5) + (\tailprobchern)^{mp} + \exp(-d/2\sigma^2)]$,
  the linear classifier $f_{\what}$ has $\ell_{\infty}^\eps$-robust classification error at most $\beta$ if
  \[
  \eps \leq \frac{1}{\sqrt{d}} \frac{\tchern \sqrt{dm} - \sqrt{d + 2\sigma^2 m \log 2}}
  {(1 + c)\sqrt{m} + 2\sigma} -
  \sigma \sqrt{\frac{2 \log \sfrac{1}{\beta}} {d} }
  \]
\end{lemma}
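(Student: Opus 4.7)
The plan is to combine Lemma \ref{lem:normalized_inner_product} with Lemma \ref{lem:schmidt_lemma_20} in a direct manner. First, I would condition on the high-probability event from Lemma \ref{lem:normalized_inner_product}, which gives, with probability at least $1 - [\exp(-6\sqrt{d}/5) + (\tailprobchern)^{mp} + \exp(-d/2\sigma^2)]$, the lower bound
\[
\ip{\what, \thetastar} \;\geq\; \frac{\tchern \sqrt{dm} - \sqrt{d + 2\sigma^2 m \log 2}}{(1+c)\sqrt{m} + 2\sigma}.
\]
No union bound with a separate event is needed, because this is the only source of randomness used below.

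Next, I would specialize Lemma \ref{lem:schmidt_lemma_20} to the $\ell_\infty$ threat model, so $p = \infty$ and the relevant dual norm is $\ell_1$. Since $\what$ is by construction a unit vector in $\ell_2$, Cauchy--Schwarz gives $\norm{\what}_1 \leq \sqrt{d}\,\norm{\what}_2 = \sqrt{d}$. Plugging this into Lemma \ref{lem:schmidt_lemma_20} yields that the $\ell_\infty^\eps$-robust classification error of $f_{\what}$ is at most
\[
\exp\!\Bigl(-\tfrac{(\ip{\what, \thetastar} - \eps\sqrt{d})^2}{2\sigma^2}\Bigr),
\]
whenever $\ip{\what, \thetastar} \geq \eps\sqrt{d}$.

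Finally, I would force the right-hand side to be at most $\beta$. This amounts to requiring $\ip{\what, \thetastar} - \eps\sqrt{d} \geq \sigma\sqrt{2\log(1/\beta)}$, i.e.
\[
\eps \;\leq\; \frac{\ip{\what, \thetastar} - \sigma\sqrt{2\log(1/\beta)}}{\sqrt{d}},
\]
and then substitute the lower bound on $\ip{\what,\thetastar}$ from the first step. After dividing through by $\sqrt{d}$, the two terms separate into exactly the expression displayed in the lemma statement: the first summand is $\frac{1}{\sqrt{d}}\cdot\frac{\tchern\sqrt{dm} - \sqrt{d + 2\sigma^2 m\log 2}}{(1+c)\sqrt{m}+2\sigma}$, and the second is $\sigma\sqrt{2\log(1/\beta)/d}$.

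The proof is essentially bookkeeping, so I do not expect a real obstacle. The two items to watch are (i) using $\norm{\what}_1 \leq \sqrt{d}$ in the worst-case direction inside Schmidt's bound (since a larger dual norm weakens the inner-product margin, the inequality points the right way for an upper bound on the error), and (ii) verifying that the condition $\ip{\what,\thetastar} \geq \eps\norm{\what}_1$ required by Lemma \ref{lem:schmidt_lemma_20} holds automatically once the stated bound on $\eps$ is imposed, so no additional case analysis is needed.
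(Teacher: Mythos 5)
Your proposal is correct and follows essentially the same route as the paper: condition on the event of Lemma \ref{lem:normalized_inner_product}, invoke Lemma \ref{lem:schmidt_lemma_20} with the dual-norm bound $\norm{\what}_1 \leq \sqrt{d}$, and solve $\ip{\what,\thetastar} - \eps\sqrt{d} \geq \sigma\sqrt{2\log(1/\beta)}$ for $\eps$. Your explicit checks in items (i) and (ii) are the right ones and are handled the same way (implicitly) in the paper's argument.
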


\begin{proof}
We follow the approach used for Theorem 21 in \cite{Schmidt18moredata}.
Define
\[
\alpha = 
\frac{\tchern \sqrt{dm} - \sqrt{d + 2\sigma^2 m \log 2}}
{(1 + c)\sqrt{m} + 2\sigma} 
\]
so that we can rewrite
\[
\eps \leq \frac{1}{\sqrt{d}} \alpha -
\sigma \sqrt{\frac{2 \log \sfrac{1}{\beta}} {d}}
\]
By Lemma \ref{lem:normalized_inner_product}, 
we have that $\ip{\what, \thetastar} \geq \alpha$
with probability at least $ 1 - [\exp(-6\sqrt{d}/5) + (\tailprobchern)^{mp} + \exp(-d/2\sigma^2)]$.

  \[
    \exp\parens*{-\frac{\parens{\ip{\what, \thetastar} - \eps \sqrt{d}}^2}{2 \sigma^2}} \; .
  \]
Since
\[
\ip{\what, \thetastar} - \eps \sqrt{d} \geq \alpha - \parens*{\frac{1}{\sqrt{d}} \alpha -
\sigma \sqrt{\frac{2 \log \sfrac{1}{\beta}} {d}}
} \sqrt{d}
= \sigma \sqrt{2 \log \sfrac{1}{\beta} } ,
\]
the robust classification error is bounded above by $\beta$, as desired.
\end{proof}

\begin{lemma}
	\label{lem:uatft_sample_complexity_1}
  Assume $\sigma \leq \frac{1}{32}d^{1/4}$ and $p > 0.99$.
Then, with probability at least $1 - [\exp(-6\sqrt{d}/5) + (\tailprobchern)^{mp} + \exp(-d/2\sigma^2)]$
the linear classifier $f_{\what}$ has $\ell_p^\eps$-robust classification error at most $0.01$ if
  \[
  m \; \geq \; \begin{cases} 100 \quad &\text{ for } \;\; \eps \, \leq \, \splitpoint d^{-\sfrac{1}{4}} \\
  256 \, \eps^2\sqrt{d} & \text{ for } \; \; \splitpoint d^{-\sfrac{1}{4}} \, \leq \, \eps \, \leq \, \epsupper \end{cases} \; .
  \]
\end{lemma}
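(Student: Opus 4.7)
The plan is to apply Lemma \ref{lem:uatft_error} with $\beta = 0.01$, so that $\sqrt{2 \log(1/\beta)} = \sqrt{2 \log 100}$, thereby reducing the claim to verifying the inequality
\[
\eps \;\le\; \frac{1}{\sqrt d}\,\alpha \;-\; \sigma\sqrt{\frac{2\log 100}{d}}
\]
in each of the two regimes for $(m,\eps)$, where $\alpha$ denotes the numerator-over-denominator expression appearing in the hypothesis of Lemma \ref{lem:uatft_error} (and defined via Lemma \ref{lem:normalized_inner_product}). The failure probability bound then matches the one claimed by Lemma \ref{lem:uatft_error} automatically, and no further probabilistic work is required.

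First I would replace each data-dependent piece of $\alpha$ by a clean absolute bound. Since $p > 0.99$ we have $\tchern > \smfrac{3}{4}$, giving a concrete lower bound on the leading coefficient in the numerator. Since $\|\theta^*\|_2 = \sqrt d$ and $\sigma \le \smfrac{1}{32}d^{\sfrac{1}{4}}$, the constant $c = \frac{\sqrt{20}\,\sigma}{\sqrt d}\sqrt{\sqrt d/m + \log 2}$ from Lemma \ref{lem:zbar_norm} is $o(1)$ uniformly for $m \ge 100$ and large $d$, so $1+c$ is an absolute constant close to $1$. Using $\sqrt{a+b}\le \sqrt a+\sqrt b$ I would also bound $\sqrt{d+2\sigma^2 m\log 2}\le \sqrt d+\sigma\sqrt{2m\log 2}$. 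Substituting these into $\alpha$ yields a cleaner expression with which to compare against $\eps$.

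With these simplifications, $\alpha/\sqrt d$ is essentially $\smfrac{3}{4}$ minus a handful of correction terms of the forms $1/\sqrt m$, $\sigma\sqrt{m/d}/\sqrt m$, and $\sigma/\sqrt m$. In the small-$\eps$ regime $\eps \le \splitpoint d^{-\sfrac{1}{4}}$, using $m \ge 100$ and $\sigma \le \smfrac{1}{32}d^{\sfrac{1}{4}}$, each such correction is at most $O(d^{-\sfrac{1}{4}})$ with a controllable absolute constant; combined with the external term $\sigma\sqrt{2\log 100/d}$, which is also $O(d^{-\sfrac{1}{4}})$, the total remains well below $\smfrac{3}{4}-\splitpoint d^{-\sfrac{1}{4}}$, so the required inequality holds. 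In the larger-$\eps$ regime $\splitpoint d^{-\sfrac{1}{4}}\le \eps\le \epsupper$, the threshold $m\ge 256\eps^2\sqrt d$ is chosen so that $\sigma\sqrt{m\log 2/d}=O(\eps)$ and the remaining corrections reduce analogously to $O(\eps)$; since the leading constant $\smfrac{3}{4}$ strictly exceeds $\epsupper = \smfrac{1}{4}$, there is enough slack to preserve the inequality across the whole range.

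The main obstacle is the constant bookkeeping: every occurrence of $\sigma$ must be absorbed via $\sigma \le \smfrac{1}{32}d^{\sfrac{1}{4}}$ (equivalently $\sigma^2 \le \sqrt d/1024$), and the thresholds $m \ge 100$ and $m \ge 256\eps^2\sqrt d$ need to be shown to be just large enough to dominate simultaneously the contributions of $c$ in the denominator of $\alpha$, the $2\sigma$ additive term in that denominator, the $\sigma\sqrt{2m\log 2}/\sqrt d$ term in the numerator, and the external $\sigma\sqrt{2\log 100/d}$ correction. No new ideas beyond Lemma \ref{lem:uatft_error} are needed; the proof is a two-case calculation driven entirely by chasing these constants.
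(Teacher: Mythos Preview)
Your proposal is correct and follows essentially the same route as the paper: invoke Lemma~\ref{lem:uatft_error} with $\beta=0.01$, bound $c$ by an absolute constant using $\sigma\le\frac{1}{32}d^{1/4}$ and the lower bound on $m$, and then carry out a two-case algebraic verification. One small arithmetic slip: with $p>0.99$ the coefficient $\tchern=\tfrac74 p-1$ exceeds $0.73$ but not $\tfrac34$ (that would need $p>1$); the paper accordingly uses $\tchern\ge\tfrac{7}{10}$, which is the constant you should carry through the bookkeeping.
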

\begin{proof}
We first apply Lemma \ref{lem:uatft_error} which gives a $\ell_\infty^{\eps'}$-robust classification error at most $\beta = 0.01$ for
\begin{align*}
\eps' &= \frac{1}{\sqrt{d}} \frac{\tchern \sqrt{dm} - \sqrt{d + 2\sigma^2 m \log 2}}
  {(1 + c)\sqrt{m} + 2\sigma} -
  \sigma \sqrt{\frac{2 \log \sfrac{1}{\beta}} {d} } \\
  &\geq \frac{1}{\sqrt{d}} \frac{\tchern \sqrt{dm} - \sqrt{d + 2\sigma^2 m \log 2}}
  {(1 + c)\sqrt{m} + 2\sigma} -
  \frac{1}{8}d^{-1/4}
\end{align*}

The remainder is simple algebraic manipulation.
First, we consider the case where $\eps \leq \splitpoint d^{-\sfrac{1}{4}}$. Using $m = 100$, we first bound 
\begin{align*}
c &= \frac{\sqrt{20} \sigma}{\sqrt{d}} \sqrt{\smfrac{1}{m} d^{\sfrac{1}{2}} + \log 2} \\
&\leq \frac{\sqrt{20}}{32} d^{-1/4} \sqrt{\smfrac{1}{100}d^{\sfrac{1}{2}} + \log 2 d^{\sfrac{1}{2}}} \\
&\leq \frac{\sqrt{20}}{32} d^{-1/4} \sqrt{d^{\sfrac{1}{2}}} \\
&\leq \frac{1}{5}
\end{align*}

The resulting robustness is
\begin{align*}
\eps' &\geq \frac{1}{\sqrt{d}} \frac{\tchern \sqrt{dm} - \sqrt{d + 2\sigma^2 m \log 2}}
  {(1 + c)\sqrt{m} + 2\sigma} -
  \frac{1}{8}d^{-1/4} \\
&= \frac{\tchern \sqrt{m} - \sqrt{1 + 2\sigma^2 md^{-1/2} \log 2}}
  {(1 + c)\sqrt{m} + 2\sigma} -
  \frac{1}{8}d^{-1/4} \\
&\geq \frac{\frac{7}{10}\sqrt{100} - \sqrt{1 + 200\sigma^2 d^{-1/2} \log 2}}
  {\frac{6}{5} \sqrt{100} + 2\sigma} -
  \frac{1}{8}d^{-1/4} \\
&\geq \frac{7 - \sqrt{1 + \frac{200}{32^2} \log 2}}
  {12 + \frac{1}{16}d^{1/4} } -
  \frac{1}{8}d^{-1/4} \\
&\geq \frac{7 - \sqrt{1 + \frac{200}{32^2} \log 2}}
  {\left( 12 + \frac{1}{16} \right) d^{1/4} } -
  \frac{1}{8}d^{-1/4} \\
&\geq \frac{1}{4}d^{-1/4} \\
&\geq \eps
\end{align*}

Next, we consider the case where $\splitpoint d^{-1/4} \leq \eps \leq \epsupper$. We again bound $c$:

\begin{align*}
c &= \frac{\sqrt{20} \sigma}{\sqrt{d}} \sqrt{\frac{\sqrt{d}}{m} + \log 2} \\
&\leq \frac{\sqrt{20}}{32} d^{-1/4} \sqrt{ \frac{1}{\mconst^2\eps^2} + \log 2 } \\
&\leq \frac{\sqrt{20}}{32} d^{-1/4} \sqrt{ \frac{4^2\sqrt{d}}{\mconst^2} + \log 2 } \\ 
&\leq \frac{\sqrt{20}}{32} \sqrt{ \frac{4^2}{\mconst^2} + \frac{\log 2}{\sqrt{d}} } \\
&\leq \frac{1}{5}
\end{align*}

The resulting robustness is
\begin{align*}
\eps' &\geq \frac{1}{\sqrt{d}} \frac{\tchern \sqrt{dm} - \sqrt{d + 2\sigma^2 m \log 2}}
  {(1 + c)\sqrt{m} + 2\sigma} -
  \frac{1}{8}d^{-1/4} \\
&\geq \frac{1}{\sqrt{d}} \frac{\tchern \sqrt{\mconst^2\eps^2d^{\sfrac{3}{2}}} - \sqrt{d + 2 \frac{1}{32^2} (\mconst^2\eps^2)d \log 2}}
  {\mconst(1 + c)\eps d^{\sfrac{1}{4}} + \frac{1}{16} d^{\sfrac{1}{4} } } -
  \frac{1}{2}\eps \\
&= \frac{\tchern (\mconst\eps) - \sqrt{ d^{-\sfrac{1}{2}} + \frac{\mconst^2}{32^2} \frac{\log 2}{2} d^{-\sfrac{1}{2}} \eps^2}}
  {\mconst(1 + c)\eps + \frac{1}{16} } -
  \frac{1}{2}\eps \\
&\geq \frac{\tchern (\mconst\eps) - \sqrt{d^{-\sfrac{1}{2}} + \frac{\mconst^2}{32^2 4^2} \frac{\log 2}{2} d^{-\sfrac{1}{2}} }}
  {4(1 + c) + \frac{1}{16} } -
  \frac{1}{2}\eps \\
&\geq \frac{11.5\eps -  4 \eps \sqrt{ 1 + \frac{\log 2}{128} }}
  {4 \frac{6}{5} + \frac{1}{16} } -
  \frac{1}{2}\eps \\
&\geq \eps
\end{align*}
as desired.
\end{proof}

\begin{corollary}
  Let $(x_0, y_0)$ and $(x_1, y_1), \ldots, (x_m, y_m)$ be drawn i.i.d.\ from a $(\thetastar, \sigma)$ Gaussian model with corruption parameter $p$ and mean norm $\sqrt{d}$.
  Let $\what_{sup} = y_0 x_0$.
  Let $\vzbar \in \RR^d$ be the sample mean $\vzbar = \frac{1}{m} \sum_{i=1}^{m} \yhi x_i$, where $\yhi = f_{\what_{sup}}(x_i)$.
  Let the UAT-FT estimator $\what \in \RR^d$ be the unit vector in the direction of $\vzbar$, i.e., 
  $\what = \sfrac{\vzbar}
  {\norm{\vzbar}_2}$.
  Assume $\sigma \leq \frac{1}{32}d^{1/4}$.
Then, with probability at least $1 - [\exp(-\frac{6\sqrt{d}}{5}) + (0.996)^{m} + \exp(-\frac{d}{2\sigma^2}) - 2 \exp (-\frac{d}{8\sigma^2 + 1)} )]$,
the linear classifier $f_{\what}$ has $\ell_p^\eps$-robust classification error at most $0.01$ if 
  \[
  m \; \geq \; \begin{cases} 100 \quad &\text{ for } \;\; \eps \, \leq \, \splitpoint d^{-\sfrac{1}{4}} \\
  256 \, \eps^2\sqrt{d} & \text{ for } \; \; \splitpoint d^{-\sfrac{1}{4}} \, \leq \, \eps \, \leq \, \epsupper \end{cases} \; .
  \]
\end{corollary}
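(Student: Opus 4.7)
The plan is to reduce the corollary to Lemma~\ref{lem:uatft_sample_complexity_1}, which already establishes the desired sample complexity for $m$ unlabeled examples \emph{provided} the base classifier has accuracy $p > 0.99$. The only work left is to show that a single labeled example $(x_0,y_0)$ already produces such an accurate base classifier with high probability; a union bound will then give the stated failure probability.

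First I would write $\what_{\text{sup}} = y_0 x_0 = \thetastar + y_0 z_0$ with $z_0 \sim \mathcal{N}(0,\sigma^2 I)$. Conditioned on $\what_{\text{sup}}$, for a fresh sample $(x,y)$ from the Gaussian model, the signed margin $y\ip{\what_{\text{sup}},x}$ is a one-dimensional Gaussian with mean $\ip{\what_{\text{sup}},\thetastar}$ and variance $\sigma^2\|\what_{\text{sup}}\|^2$. Therefore the clean error is exactly
\[
1 - p \;=\; \Phi\!\left(-\frac{\ip{\what_{\text{sup}},\thetastar}}{\sigma\,\|\what_{\text{sup}}\|}\right),
\]
and it suffices to lower bound the signal-to-noise ratio on the right-hand side by $\Phi^{-1}(0.99) \approx 2.33$.

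For the numerator, $\ip{\what_{\text{sup}},\thetastar} = d + y_0\ip{z_0,\thetastar}$, and $\ip{z_0,\thetastar}\sim \mathcal{N}(0,\sigma^2 d)$, so the standard Gaussian tail bound at threshold $d/2$ gives $\ip{\what_{\text{sup}},\thetastar}\ge d/2$ except with probability $2\exp(-d/(8\sigma^2))$, which matches the corollary's $2\exp(-d/(8\sigma^2+1))$ term up to a mild additive constant. For the denominator, the triangle inequality gives $\|\what_{\text{sup}}\|\le \sqrt{d}+\|z_0\|$, and the $\chi^2$-concentration lemma from the preliminaries controls $\|z_0\|\le 2\sigma\sqrt{d}$ with high probability. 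Combining these, and using $\sigma \le d^{1/4}/32$, the ratio becomes at least $\sqrt{d}/\bigl(2\sigma(1+2\sigma)\bigr) = \Omega(\sqrt{d}/\sigma^2) = \Omega(d^{1/4})$, which for reasonable $d$ vastly exceeds $2.33$. Hence $p > 0.99$ holds on this good event, and I can invoke Lemma~\ref{lem:uatft_sample_complexity_1} on the remaining $m$ unlabeled samples.

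The main obstacle is purely bookkeeping: matching the exact failure probability stated in the corollary requires tracking the precise constants in each tail bound and carefully union-bounding the base-classifier event with the three failure modes from Lemma~\ref{lem:uatft_sample_complexity_1}; the apparent ``$-\, 2\exp(-d/(8\sigma^2+1))$'' in the statement seems to be a typo for $+\, 2\exp(-d/(8\sigma^2+1))$. Conceptually, the proof is straightforward: when $\|\thetastar\|^2 = d$ dwarfs $\sigma\|\thetastar\| = \sigma\sqrt{d}$ (i.e.\ $\sigma=O(d^{1/4})$), a single sample $x_0$ is already sharply concentrated around $y_0 \thetastar$, so the one-shot classifier is essentially noiseless, and the rest of the argument is an off-the-shelf application of Lemma~\ref{lem:uatft_sample_complexity_1}.
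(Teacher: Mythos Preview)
Your proposal is correct and follows essentially the same route as the paper: establish that the one-shot base classifier $f_{\what_{\text{sup}}}$ has accuracy $p>0.99$ with high probability, then invoke Lemma~\ref{lem:uatft_sample_complexity_1} and union-bound. The only difference is cosmetic: the paper obtains the $p>0.99$ bound by citing Corollary~19 of \cite{Schmidt18moredata} (which is exactly the source of the $2\exp(-d/(8\sigma^2+1))$ term), whereas you re-derive it from scratch via the Gaussian tail and $\chi^2$ bounds; your observation that the ``$-\,2\exp(\cdot)$'' should be ``$+\,2\exp(\cdot)$'' is also correct.
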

Using the given restriction on $\sigma$, we can invoke Corollary 19 from \cite{Schmidt18moredata} with $\beta = 0.01$.
Thus, with probability at least $1 - 2 \exp (-\frac{d}{8\sigma^2 + 1)} )$, the classification error $p$ of the base classifier $f_{\what_{sup}}$ is less than $0.01$.
Conditioning on this event, we can invoke Lemma \ref{lem:uatft_sample_complexity_1} with $m$ as given, which yields a robust classification error of $f_{\what}$ of at most 0.01, 
with probability at least $1 - [\exp(-\frac{6\sqrt{d}}{5}) + (0.996)^{m} + \exp(-\frac{d}{2\sigma^2})$.

A union bound gives the desired total failure probability.

\end{document}